\documentclass[twoside]{article}
%\usepackage{aistats2018}
 % If your paper is accepted, change the options for the package
% aistats2018 as follows:
%
\usepackage[accepted]{aistats2018_arxiv}
%
% This option will print headings for the title of your paper and
% headings for the authors names, plus a copyright note at the end of
% the first column of the first page.

% My environment
% --------------
\usepackage{xr}
\usepackage{url}
\usepackage{hyperref}     
\usepackage{multirow}
\usepackage{amsmath}
\usepackage{amssymb}
\usepackage{amsthm}
\usepackage{amsfonts}
\usepackage[dvips]{graphicx}
\usepackage{color}
\usepackage{here}
\usepackage{wrapfig}
\usepackage{centernot}
\usepackage{algorithm}
\usepackage{algorithmic}

\newcommand{\ackname}{Acknowledgements}
\setlength\textfloatsep{3truemm}

\newtheorem{theorem}{Theorem}

\newtheorem{proposition}{Proposition}
\newtheorem{definition}{Definition}

%---------------------------------------------------------------------------
\def\pd<#1>{\langle #1 \rangle}
\def\defeq{\overset{\mathrm{def}}{=}}
%---------------------------------------------------------------------------

\begin{document}

% If your paper is accepted and the title of your paper is very long,
% the style will print as headings an error message. Use the following
% command to supply a shorter title of your paper so that it can be
% used as headings.
%
%\runningtitle{I use this title instead because the last one was very long}

% If your paper is accepted and the number of authors is large, the
% style will print as headings an error message. Use the following
% command to supply a shorter version of the authors names so that
% they can be used as headings (for example, use only the surnames)
%
%\runningauthor{Surname 1, Surname 2, Surname 3, ...., Surname n}
\runningauthor{Atsushi Nitanda, Taiji Suzuki}

\twocolumn[
\aistatstitle{Gradient Layer: Enhancing the Convergence of Adversarial Training for Generative Models}

\aistatsauthor{Atsushi Nitanda$^{\dag,1,2}$  
  \And Taiji Suzuki$^{\ddag,1,2,3}$  }

\aistatsaddress{
  $^1$Graduate School of Information Science and Technology, The University of Tokyo\\
  $^2$Center for Advanced Intelligence Project, RIKEN \\
  $^3$PRESTO, Japan Science and Technology Agency }
]

\begin{abstract}
We propose a new technique that boosts the convergence of training generative adversarial networks.
Generally, the rate of training deep models reduces severely after multiple iterations.
A key reason for this phenomenon is that a deep network is expressed using a 
highly non-convex finite-dimensional model, and thus
the parameter gets stuck in a local optimum.
Because of this, methods often suffer not only from degeneration of the convergence speed but also from limitations in the representational power of the trained network.
To overcome this issue, we propose an additional layer called the {\it gradient layer}
to seek a descent direction in an {\it infinite-dimensional space}.  
Because the layer is constructed in the infinite-dimensional space, 
we are not restricted by the specific model structure of finite-dimensional models. 
As a result, we can get out of the local optima in finite-dimensional models and move towards the global optimal function more directly.
In this paper, this phenomenon is explained from the functional gradient method perspective of the gradient layer.
Interestingly, the optimization procedure using the gradient layer naturally constructs the deep structure of the network.
Moreover, we demonstrate that this procedure can be regarded as a discretization method of the gradient flow that naturally reduces the objective function.
Finally, the method is tested using several numerical experiments, which show its fast convergence. 
\end{abstract}

\section{Introduction}
Generative adversarial networks (GANs) \cite{GAN2014} are a promising scheme for learning generative models.
GANs are trained by a discriminator and a generator in an adversarial way.
Discriminators are trained to classify between real samples and fake samples drawn from generators, whereas generators are trained to mimic real samples.
Although training GANs is quite difficult, adversarial learning succeeded in generating very impressive samples \cite{radford2016unsupervised}, 
and there are many subsequent studies \cite{larsen2016autoencoding,salimans2016improved,NIPS2016_6066,NIPS2016_6399,zhang2017stackgan}.
Wasserstein GANs (WGANs) \cite{arjovsky2017wasserstein} are a variant to remedy the mode collapse that appears in the standard GANs by using 
the Wasserstein distance \cite{villani2008optimal}, although they also sometimes generate low-quality samples or fail to converge.
Moreover, an improved variant of WGANs was also proposed \cite{gulrajani2017improved} and it succeeded in generating high-quality samples and stabilizing WGANs.
%Thus, there is a need to develop methods to stabilize the learning procedure [ImproveGAN].
Although these attempts have provided better results, there is still scope to improve the performance of GANs further.

One reason for this difficulty stems from the limitation of the representational power of the generator.
If the discriminator is optimized for the generator,
the behavior is solely determined by the samples produced from that generator.
In other words, for a generator with a poor representational power,
the discriminator terminates its learning in the early stage and consequently results in having low discriminative power.
However, for a finite-dimensional parameterized generator, 
the ability to generate novel samples to cheat the discriminators is limited. 
In addition, the highly non-convex structure of the deep neural network for the generator prevents 
us from finding a direction for improvement.
As a result, the trained parameter gets stuck in a local optimum and the training procedure does not proceed any more.

In this study, we propose a new learning procedure to overcome the issues of 
limited representational power and local optimum
by introducing a new type of layer called a {\it gradient layer}.
The gradient layer finds a direction for improvement in an infinite-dimensional space by computing the {\it functional gradient} \cite{luenberger1969optimization}
instead of the ordinary gradient induced by a finite-dimensional model. 
Because the functional gradient used for the gradient layer is not limited in the tangent space of a finite-dimensional model, 
it has much more freedom than the ordinary finite-dimensional one.
Thanks to this property, our method can break the limit of the local optimum induced by the strong non-convexity of a finite-dimensional model, which
gives much more representational power to the generator. 
We theoretically justify this phenomenon from the functional gradient method perspective and rigorously present a convergence analysis.
Interestingly, one iteration of the method can be recognized as inserting one layer into the generator and the total number of iterations is the number of inserted layers.
Therefore, our learning procedure naturally constructs the deep neural network architecture by inserting gradient layers.
Although, gradient layers can be inserted into an arbitrary layer, they are typically stacked on top of the generator in the final training phase to improve the generated sample quality.

Moreover, we provide another interesting perspective of the gradient layer, i.e., discretization of the gradient flow in the space of probability measures.
In Euclidean space, the steepest descent which is the typical optimization method, can be derived by discretizing the gradient flow that naturally produces a curve to reduce the objective function.
Because the goal of GANs is to generate a sequence of probability measures moving to the empirical distribution by training samples,
it is natural to consider a gradient flow in the space of probability measures defined by a distance between generated distribution and the empirical distribution
and to discretize it in order to construct practical algorithms.
We show that the functional gradient method for optimizing the generator in the function space is such a discretization method; in other words,
the gradient flow can be tracked by stacking gradient layers successively.
%% We demonstrate how the proposed layer improves convergence speed of adversarial training of generative models on MNIST and $8$-gaussian data.
%% Surprisingly, proposed method can converge on $8$-gaussian data for $100$-iterations while standards GANs or its variants require more than several thousand iterations or fail to converge.

The recently proposed SteinGAN \cite{wang2016learning} is closely related to our work and has a similar flavor, but it is based on another strategy to track gradient flow.
That is, since that discretization is mimicked by a fixed-size deep neural network in SteinGAN, it may have the same limitation as typical GANs.
By contrast, our method directly tracks the gradient flow in the final phase of training GANs to break the limit of the finite-dimensional generator.

\begin{figure*}[t]
  \begin{center} 
          \includegraphics[width=125mm,angle=0]{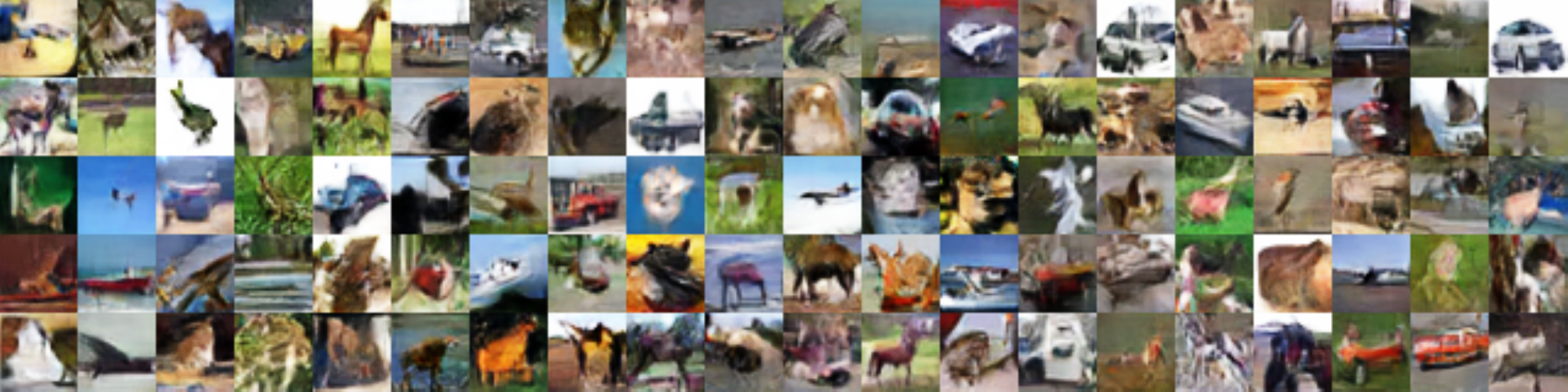} 
    \caption{Random samples drawn from the generator trained by Algorithm \ref{WGAN_FT} on the CIFAR-10 dataset.} \label{wgan_gl_sample}
\end{center}
\end{figure*}

\section{Brief Review of Wasserstein GANs}
In this section, we introduce WGANs and their variants.
Although our proposed gradient layer is applicable to various models, we demonstrate how it performs well for the training of generative models;
in particular, we treat Wasserstein GANs as a main application in this paper. % since the gradient layer has a nice property on training WGANs concerning global convergence.
Let us start from briefly reviewing WGANs.

WGAN is a powerful generative model based on the $1$-Wasserstein distance, defined as the $L^1$ minimum cost of transporting one probability distribution to the other.
Let $\mathcal{X}\subset\mathbb{R}^v$ and $\mathcal{Z}\subset\mathbb{R}^h$ be a compact convex data space and a hidden space, respectively.
A typical example of $\mathcal{X}$ is the image space $[0,1]^v$.
For a noise distribution $\mu_n$ on $\mathcal{Z}$, WGAN learns a data generator $g: \mathcal{Z}\rightarrow \mathcal{X}$
to minimize an approximation to the $1$-Wasserstein distance between the data distribution $\mu_D$ and the push-forward distribution $g_\sharp \mu_n$,
which is a distribution that the random variable $g(z)$ follows when $z\sim \mu_n$ (in other words, the distribution obtained by applying a coordinate transform $g$ to $z \sim \mu_n$).
That is, WGAN can be described as the following $\min\max$ problem by using a Kantrovich-Rubinstein duality form of the $1$-Wasserstein distance:
\[ \min_{g\in\mathcal{G}} \max_{f\in\mathcal{F}} \mathcal{L}(f,g) \defeq \mathbb{E}_{x\sim\mu_D}[f(x)]-\mathbb{E}_{z\sim\mu_n}[f\circ g(z)], \]
where $\mathcal{G}$ is the set of generators and $\mathcal{F}$ is an approximate set to the set of $1$-Lipschitz continuous functions called {\it critic}.
In WGANs, $\mathcal{G}, \mathcal{F}$ are parameterized neural networks $\{g_{\theta}\}, \{f_{\tau}\}$ and the problem is solved by alternate optimization: maximizing and minimizing $\mathcal{L}(f_\tau,g_\theta)$ with respect to $\tau$ and $\theta$, alternately.

In practice, to impose the Lipschitz continuity on critics $f_{\tau}$, penalization techniques were explored.
For instance, the original WGANs \cite{arjovsky2017wasserstein} use weight clipping $\|\tau\|_\infty \leq c$, which implies the upper-bound on the norm of $\nabla_\tau f_\tau$ and makes it Lipschitz continuous.
However, it was pointed out in a subsequent study \cite{gulrajani2017improved} that such a restriction seems to be unnatural and sometimes leads to a low-quality generator or a failure to converge.
In the same study, an improved variant of WGANs called WGAN-GP was proposed, which succeeded in stabilizing the optimization process and generating high-quality samples.
WGAN-GP \cite{gulrajani2017improved} adds the gradient penalty $( \|\nabla_{\tilde{x}}f_\tau(\tilde{x}) \|_2 - 1 )^2$ to the objective function in the training phase of critics,
where $\tilde{x}$ is a random interpolation between a training example $x \sim \mu_D$ and a generated sample $g(z) \sim g_{\theta\sharp}\mu_n$,
i.e., $\tilde{x} \leftarrow \epsilon x + (1-\epsilon)g(z)$ ($\epsilon \sim U[0,1]$: uniform distribution).
DRAGAN \cite{kodali2017train} is a similar method to WGAN-GP, although it is based on a different motivation.
DRAGAN also uses the gradient penalty, but the penalty is imposed on a neighborhood of the data manifold by a random perturbation of a training example.

WGAN and its variants are learned by alternately optimizing $f_\tau$ and $g_\theta$, as stated above.
We can regard this learning procedure as a problem of minimizing $\mathcal{L}(g_\theta) \defeq \max_{\tau}\{ \mathcal{L}(f_\tau, g_\theta) - \lambda R_{\tau}\}$, where $R_{\tau}$ is a penalty term. 
Let $\mathcal{L}(f_\tau,g_\theta)-\lambda R_{\tau}$ attain its maximum value at $\tau_*$ for $g_{\theta}$.
Then, the gradient $\nabla_\theta \mathcal{L}(g_\theta)$ is the same as $- \mathbb{E}_{\mu_n}[\nabla_\tau f_{\tau_*}^\top \nabla_\theta g_\theta(z)]$ by the envelope theorem \cite{milgrom2002envelope} 
when both terms are well-defined.
The differentiability of $\mathcal{L}(g_\theta)$ with respect to $\theta$ almost everywhere is proved in \cite{arjovsky2017wasserstein} under a reasonable assumption.
Hence, we can apply the gradient method to this problem by approximating this gradient with finite particles generated from $\mu_n$.
However, because it is difficult to obtain $f_{\tau_*}$, we run the gradient method for several iterations on training a critic instead of exactly computing $f_{\tau_*}$ at each $g_\theta$.
We can notice that this learning procedure is quite similar to that of the standard GAN \cite{GAN2014}.

%Even if $\nabla_\tau f_{\tau_*} \neq 0$, the gradient $\nabla_\theta \mathcal{L}(g_\theta)$ can become zero because $g_\theta$ is updated in restricted finite-dimensional space,
%though more general function $g$ is admissible for the origidnal purpose.
%We show that the gradient layer we proposed improves optimization ability of the generator.
%As a result the layer stabilizes the convergence %though it increases the computational cost 
%and we can fine-tune the results generated by GANs and WGANs by recursively applying the gradient layers.

%In our approach, we use $1$-Wasserstein distance as the loss function and use the gradient penalty like the improved WGANs.
%However, we allow the gradient penalty to be the one-sided variant $(\max(\|\nabla_{\tilde{x}}f_\tau(\tilde{x})\|_2-1,0))^2$ rather than the two-sided penalty described above
%because the one-sided penalty suffice to impose the Lipschitz continuity on critics.
%Note that although, we train critics in simillar manner to existing methods, training scheme of the generator proposed later in the paper is quite different from existing methods.

\section{Gradient Layer}
In the usual training procedure of WGANs, though more general maps are admissible for the original purpose,
generators are parameterized by finite-dimensional space as described in the previous section, and 
the parameter may get stuck in a local optimum induced by this restriction, or the speed of convergence may reduce.
In this work, we propose a gradient layer that accelerates the convergence and breaks the limit of finite-dimensional models.
This layer is theoretically derived by the infinite-dimensional optimization method.
We first explain the high-level idea of the gradient layer that strictly improves the ability of generator
and why our method enhances the convergence of training WGANs.

\subsection{High-level idea of gradient layer}
Here, we explain gradient layer with intuitive motivation.
It is inserted into the generator $g$ in WGANs.
We now focus on minimizing $\mathcal{L}(f,g)$ with respect to $g$ under a fixed critic $f$,
that is, we consider the problem $\min_g \mathcal{L}_f(g) \defeq \mathbb{E}_{\mu_n}[-f(g(z))]$.
Let us split $g$ into two neural networks $g=g_1\circ g_2$ at arbitrary layer where a new layer is to be inserted.
Our purpose is to specify the form of layer $\phi$ that reduces the objective value by perturbations of inputs $g_2(z)$, i.e., $\mathcal{L}_f(g_1\circ \phi \circ g_2)\leq \mathcal{L}_f(g)$.
Since $\mathcal{L}_f(g_1 \circ \phi \circ g_2)$ is regarded as the integral $\mathbb{E}_{z'\sim g_{2\sharp}\mu_n}[-f(g_1(\phi(z')))]$ with respect to the push-forward distribution $g_{2\sharp}\mu_n$,
this purpose is achieved by transporting the input distribution of $\phi$ along the gradient field $\nabla_{z'} f(g_1(z'))$.
Therefore, we propose a gradient layer $G_{\eta}$ with one hyperparameter $\eta > 0$ 
as a map that transforms an input $z'$ to

\begin{equation}
  G_{\eta}(z') = z' + \eta \nabla_{z'} f(g_1(z')). \label{grad_layer}
\end{equation}
Because the gradient layer depends on the parameters $\tau, \theta$ of the upper layers $f, g_1$, we specify the parameter as $G_\eta^{\tau,\theta}$ if needed.

Applying the gradient layer recursively, it further progresses and achieves a better objective. 
The computation of the gradient layer is quite simple. Actually, simply taking the derivative is sufficient, which can be efficiently executed.
Because too many gradient layers would lead to overfitting to the critic $f$, we stop stacking the gradient layer after an appropriate number of steps.
Indeed, if $f\circ g_1$ is Lipschitz continuous, $id+\eta f\circ g_1$ for sufficiently small $\eta$ is an injection because $(id+\eta f\circ g_1)(z)=(id+\eta f\circ g_1)(z')$ implies
$\|z-z'\|_2 \leq \eta \mathcal{L}_{f\circ g_1}\| z-z'\|_2 $ where $\mathcal{L}_{f\circ g_1}$ is the Lipschitz constant.
Thus, a topology of ${\rm supp}(g_{2\sharp}\mu_n)$ is preserved and early stopping is justified.
Then, this layer efficiently generates high-quality samples for the critic and the overall adversarial training procedure can be also boosted.

\subsection{Powerful optimization ability}
Because the gradient layer directly transports inputs as stated above, it strictly improves the objective value if there is room for optimization,
unlike finite-dimensional models that may be trapped in local optima induced by the restriction of generators.
Indeed, when the gradient layer cannot move inputs, i.e., $G_\eta(z')=z'$, the gradient $\nabla_{z'} f(g_1(z'))$ vanishes on ${\rm supp}(g_{2\sharp}\mu_n)$
and there is no chance to improve the objective value by optimizing $g_2$ because of the chain rule of derivatives.
We now explain this phenomenon more precisely.
Let us first consider the training of $g_2$ in the usual way.
We denote by $w_2$ the parameter of $g_2$.
As stated in the previous section, $w_2$ is updated by using the gradient
\begin{equation}
  \mathbb{E}_{\mu_n}[ J_{w_2}^\top g_2(z) \nabla_{z'}f(g_1(g_2(z))) ], \label{finite_grad}
\end{equation}
where $z'$ is the input to $g_1$ and $J_{w_2}g_2(z)$ is the Jacobian matrix of $g_2$ with respect to $w_2$.
We immediately notice that this gradient (\ref{finite_grad}) is the inner product of $J_{w_2}^\top g_2(\cdot)$ and $\nabla_{z'}f(g_1(g_2(\cdot)))$ in $L^2(\mu_n)$-space,
and the latter term is the output of the gradient layer.
Therefore, when the gradient layer cannot move the inputs, the gradient with respect to $w_2$ also vanishes.
However, even if the gradient vanishes, the gradient layer $G_\eta$ can move the inputs in general.
Thus, whereas the optimization of $g_2$ may get stuck in a local optimum or be slowed down in this case, the gradient layer strictly improves the quality of the generated samples for the upper layers,
because $\nabla_{z'}f(g_1(g_2(z)))$ does not vanish.
This is the reason the gradient layer has a greater optimization ability than finite-dimensional models.

\subsection{Algorithm description}
The overall algorithm is described in this subsection.
We adopt WGAN-GP as the base model to which gradient layer is applied.
Let us denote by $R_{f_\tau}(\tilde{x})$ a gradient penalty term.
In a paper on the improved WGANs \cite{gulrajani2017improved}, the use of a two-sided penalty $( \|\nabla_{\tilde{x}}f_\tau(\tilde{x}) \|_2 - 1 )^2$ is recommended.
However, we also allow the use of the one-sided variant $(\max(\|\nabla_{\tilde{x}}f_\tau(\tilde{x})\|_2-1,0))^2$.
As for the place in which the gradient layer is inserted, we can propose several possibilities, e.g.,
inserting the gradient layer into (i) the top and  (ii) the bottom of the layers of the generator.
The latter usage is described in the appendix.

The first usage is stacking gradient layers on the top of the generator, except for normalization to fine-tune the generator in the final phase.
Although a normalization term such as ${\rm tanh}$ is commonly stacked on generators to bound the output range of the generators, gradient layers are typically applied before the normalization layer.
%That is, it is regarded as $g_1={\rm tanh}$ but we omit the notation $g_1$ for simplicity.
Since ${\rm tanh}$ is a fixed function, it is no problem to combine ${\rm tanh}$ with critics by reinterpreting $\mathcal{F}$ and $\mathcal{X}$.
The gradient layer directly handles the generated samples, so that it may significantly improve the sample quality.
Because the gradient $\nabla_x f_\tau(x)$ of the critic with respect to data variables provides the direction to improve the quality of the current generated samples,
it is expected that we can obtain better results by tracking the gradient iteratively. %se fields iteratively.
To compute the output from the gradient layer for a completely new input, 
we need to reproduce the computation of the gradient layers, which can be 
realized by saving the history of the parameters of critics and stacking the gradient layers using these parameters. The concrete procedure is described in Algorithm \ref{WGAN_FT}.
When executing Algorithm \ref{WGAN_FT}, the parameter of $g_\theta$ is fixed, so that the push-forward measure $g_{\theta\sharp}\mu_n$ is treated as a base probability measure and we denote it by $\mu_g$.
Because the gradient layers depend on the history of the parameters in this case, we specify the parameter to be used: $G_\eta^\tau$.
%The inner solver $\mathcal{A}$ is one step of gradient-based method such as SGD with momentum, Adam \cite{kingma2015adam}, and RMSPROP \cite{Tieleman2012}.
For the parameter $\tau$ and the gradient $v$, we denote by $\mathcal{A}(\tau,v)$ one step of a gradient-based method such as SGD with momentum,
Adam \cite{kingma2015adam}, and RMSPROP \cite{tieleman2012lecture}.
From the optimization perspective, we show that Algorithm \ref{WGAN_FT} can be regarded as an approximation to the functional gradient method.
From this perspective, we show fast convergence of the method under appropriate assumptions where the objective function is smooth and the critics are optimized in each loop.
% Moreover, we show the convergence to the target distribution $\mu_D$ under an additional assumption.
This theoretical justification is described later.
Although Algorithm \ref{WGAN_FT} has a great optimization ability, applying the algorithm to large models is difficult because it requires the memory to register 
parameters; thus, we propose its usage for fine-tuning in the final phase of training a WGAN-GP.
After the execution of Algorithm \ref{WGAN_FT}, we can generate samples by using the history of critics, the learning rate, and the base distribution as described in Algorithm \ref{data_gen_proc}.

\begin{algorithm}[h]
  \caption{Finetuning WGAN-GP}
  \label{WGAN_FT}
\begin{algorithmic}
  \STATE {\bfseries Input:} The base distribution $\mu_g=g_\sharp \mu_n$, the minibatch size $b$, the number of iterations $T$,
  the initial parameters $\tau_0$ of the critic, %gradient based solver $\mathcal{A}$,
  the number of iterations $T_0$ for the critic, the regularization parameter $\lambda$, and the learning rate $\eta$ for gradient layers.\\
   \vspace{1mm}

   \FOR{$k=0$ {\bfseries to} $T-1$}
   \STATE $\tau \leftarrow \tau_k$\\
   \FOR{$k_0=0$ {\bfseries to} $T_0-1$}
   % \STATE Sample two minibatches and real numbers: %$\{x_i\}_{i=1}^b \sim \mu_D^b$, $\{z_i\}_{i=1}^b \sim \mu_g^b$, and random numbers $\{\epsilon_i\}_{i=1}^b \sim U[0,1]^b$. \\
   \STATE$\{x_i\}_{i=1}^b \sim \mu_D^b$, $\{z_i\}_{i=1}^b \sim \mu_g^b$, $\{\epsilon_i\}_{i=1}^b \sim U[0,1]^b$ \\
   %\STATE Apply gradient layers using the history $\{\tau_j\}_{j=1}^k$:\\
   \STATE $\{z_i\}_{i=1}^b \leftarrow \{G^{\tau_k}_\eta\circ \cdots \circ G^{\tau_1}_\eta(z_i)\}_{i=1}^b$
   \STATE $\{\tilde{x}_i\}_{i=1}^b \leftarrow \{\epsilon_i x_i + (1-\epsilon_i)z_i\}_{i=1}^b$
   \STATE $v \leftarrow \nabla_\tau \frac{1}{b}\sum_{i=1}^b [f_\tau(z_i)-f_\tau(x_i) + \lambda R_{f_\tau}(\tilde{x}_i)])$
   \STATE $\tau \leftarrow \mathcal{A}(\tau,v)$ \\
   \ENDFOR
   \STATE $\tau_{k+1}\leftarrow \tau$\\   
   \ENDFOR
   \STATE Return $(\tau_1,\ldots,\tau_T)$.
\end{algorithmic}
\end{algorithm}

\begin{algorithm}[h]
  \caption{Data Generation for Algorithm \ref{WGAN_FT}}
  \label{data_gen_proc}
\begin{algorithmic}
   \STATE {\bfseries Input:} the seed drawn from base measure $z\sim \mu_g=g_\sharp \mu_n$, the history of parameters $\{\tau_k\}_{k=1}^T$, and the learning rate $\eta$ for gradient layers.\\
   \vspace{1mm}
   %\STATE Apply gradient layers $T$ times using the history $\{\tau_j\}_{j=1}^T$: $z' \leftarrow G^{\tau_T}_\eta\circ \cdots \circ G^{\tau_1}_\eta(z)$.
   \STATE Return the sample $G^{\tau_T}_\eta\circ \cdots \circ G^{\tau_1}_\eta(z)$.
\end{algorithmic}
\end{algorithm}

\section{Functional Gradient Method} \label{sec:funct_grad}
In this section, we provide mathematically rigorous derivation from the functional gradient method \cite{luenberger1969optimization} perspective
under the Fr\'{e}chet differentiable (functional differentiable) assumption on $\mathcal{L}$.
That is, we consider an optimization problem with respect to a generator in an infinite-dimensional space.
For simplicity, we focus on the case where the gradient layer is stacked on top of a generator $g$ and we treat $g_\sharp \mu_n$ as the base measure $\mu_g$.
Thus, in the following we omit the notation $g$ in $\mathcal{L}(f,\phi \circ g)$.
Let $L^2(\mu_g)$ be the space of $L^2(\mu_g)$-integrable maps from $\mathbb{R}^v$ to $\mathbb{R}^v$, equipped with the $\pd<\cdot,\cdot>_{L^2(\mu_g)}$-inner product:
for $\forall \phi_1, \forall \phi_2 \in L^2(\mu_g)$,
\[ \pd<\phi_1,\phi_2>_{L^2(\mu_g)} = \mathbb{E}_{\mu_g}[\phi_1(z)\top \phi_2(z)]. \]
To learn WGAN-GP, we consider the infinite-dimensional problem:
\begin{align*}
  \min_{\phi \in L^2(\mu_g)} \max_{f_\tau\in\mathcal{F}} \mathcal{L}(f_\tau,\phi)-\lambda R_{f_\tau},
\end{align*}
where $R_{f_\tau}$ is a gradient penalty term.
To achieve this goal, we take a G\^{a}teaux derivative along a given map $v \in L^2(\mu_g)$, i.e., a directional derivative along $v$.
Let us denote $\max_{f_{\tau}\in \mathcal{F}}\{\mathcal{L}(f_{\tau},\phi)-\lambda R_f\}$ by $\mathcal{L}(\phi)$ and
$\arg \max_{f_{\tau}\in \mathcal{F}}\{\mathcal{L}(f_{\tau},\phi)-\lambda R_{f_{\tau}}\}$ by $f^*_\phi$ and the corresponding parameter by $\tau_\phi^*$, i.e.,
$f^*_\phi = f_{\tau_\phi}^*$ for $\phi \in L^2(\mu_g)$.
If every $f \in \mathcal{F}$ is Lipschitz continuous and differentiable, we can find that by the envelope theorem and Lebesgue's convergence theorem 
this derivative takes the form:
\begin{equation}
  \frac{d}{dt}\mathcal{L}(\phi+tv)\Big|_{t=0} = - \mathbb{E}_{\mu_g}[ \nabla_{x}f_\phi^*(x)|_{x=\phi(z)}^\top v(z)]. \notag
\end{equation}
Therefore, $-\nabla_{x}f^*_{\phi}(x)|_{x={\phi}(\cdot)}$ can be regarded as a Fr\'{e}chet derivative (functional gradient) in $L^2(\mu_g)$
and we denote it by $\nabla_\phi \mathcal{L}(\phi)$, which performs like the usual gradient in Euclidean space.
Using this notation, the optimization of $\mathcal{L}(\phi)$ can be accomplished by Algorithm \ref{FGD}, which is a gradient descent method in a function space.
Because the functional gradient has the form $-\nabla_x f_\phi^* \circ \phi$, each iteration of the functional gradient method with respect to $\phi$ is
$\phi \leftarrow \phi + \eta \nabla_x f_\phi^* \circ \phi=(id+\eta \nabla_x f_\phi^*)\circ \phi$, where $\eta$ is the learning rate.
We notice here that this iteration is the composition of a perturbation map $id+\eta \nabla_xf_\phi^*$ and a current map $\phi$
and is nothing but stacking a gradient layer $G_\eta^{\tau_*}$ on $\phi(z)$.
In other words, the functional gradient method with respect to $\phi$, i.e., Algorithm \ref{FGD}, is the procedure of building a deep neural network by inserting gradient layers, where
the total number of iterations is the number of layers.
Moreover, we notice that if we view $\nabla_x f_\phi^{*}$ as a perturbation term, this layer resembles that of {\it residual networks} \cite{he2016deep}
which is one of the state-of-the-art architectures in supervised learning tasks.

However, executing Algorithm \ref{FGD} is difficult in practice because the exact optimization with respect to a critic $f$ to compute $\mathcal{L}(\phi)$ is a hard problem.
Thus, we need an approximation and we argue that Algorithm \ref{WGAN_FT} is such a method.
This point can be understood as follows.
Roughly speaking, it maximizes $\mathcal{L}(f,\phi)$ with respect to $f$ in the inner loop under fixed $\phi=G_\eta^{\tau_{k}} \circ G_\eta^{\tau_{k-1}} \circ \dots \circ G_\eta^{\tau_{1}}$
to obtain an approximate solution $\tau_{k+1}$ to $\tau_*$ and minimizes that with respect to $\phi$ in the outer loop by stacking $G_\eta^{\tau_{k+1}}$,
which is an approximation to $G_\eta^{\tau_*}$.
Thus, Algorithm \ref{WGAN_FT} is an approximated method, but we expect it to achieve fast convergence owing to the powerful optimization ability of the functional gradient method, as shown later.
In particular, it is more effective to apply the algorithm in the final phase of training WGAN-GP to fine-tune it, because the optimization ability of parametric models are limited.

\begin{algorithm}[h]
  \caption{Functional Gradient Descent}
  \label{FGD}
\begin{algorithmic}
   \STATE {\bfseries Input:} the initial generator $g$ and the learning rate $\eta$.\\
   \vspace{1mm}
   $\phi_0 \leftarrow g$
   \FOR{$k=0$ {\bfseries to} $T-1$}
   \STATE $\phi_{k+1} \leftarrow \phi_k - \eta \nabla_\phi \mathcal{L}(\phi_k)$ \\
   \ENDFOR
   \STATE Return the function: $\phi_T$.
\end{algorithmic}
\end{algorithm}

\section{Convergence Analysis}
Let us provide convergence analysis of Algorithm \ref{FGD} for the problem of the general form: $\min_\phi \mathcal{L}(\phi)$.
The convergence can be shown in an analogous way to that for the finite-dimensional one.
To prove this, we make a smoothness assumption on the loss function.
We now describe a definition of the smoothness on a Hilbert space whose counterpart in finite-dimensional space is often assumed for smooth non-convex optimization methods.

\begin{definition} \label{smooth_definition}
  Let $h$ be a function on a Hilbert space $(\mathcal{Z},\pd<,>_{\mathcal{Z}})$.
  We call that $h$ is $L$-smooth at $z$ in $U$ if $h$ is differentiable at $z$ and it follows that $\forall z' \in U$.
  \begin{equation*}
    | h(z') - h(z) - \pd<\nabla_z h(z),z'-z>_{\mathcal{Z}} | \leq \frac{L}{2}\|z'-z\|^2_{\mathcal{Z}}.
  \end{equation*}
\end{definition}
    %\| \nabla_\phi \mathcal{L}(\phi_1) - \nabla_\phi \mathcal{L}(\phi_2) \|_{L^2(\mu_g)} \leq L\|\phi_1-\phi_2\|_{L^2(\mu_g)}. \label{l_smooth_ineq}
The following definition and proposition provide one condition leading to Lipschitz smoothness of $\mathcal{L}$.
Let us denote by $\|\cdot\|_{L^\infty(\mu_g)}$ the sup-norm $\|\psi\|_{L^\infty(\mu_g)}=\sup_{{\rm supp(\mu_g)}}\|\psi(z)\|_2$ and by $B_r^{\infty}(\phi)$ a ball of center $\phi$ and radius $r$.
Let $\hat{\mathcal{L}}(f,\psi) = \mathcal{L}(f,\psi) - \lambda R_{f}$.
In the following we assume $f_{\psi}^*$ is uniquely defined for $\psi\in L^2(\mu_g)$ and $L$-smoothness with respect to the input $x$.
\begin{definition} \label{regular_definition}
  For positive values $r$ and $L$, we call that $\mathcal{L}$ is $(r,L)$-regular at $\phi$ when the following condition is satisfied; For $\forall \psi \in B_r^{\infty}(\phi)$,
  $\hat{\mathcal{L}}(f_{\psi'}^*,\psi)$ is $L$-smooth at $\psi$ with respect to $\psi'$ in $B_r^{\infty}(\psi)$.
\end{definition}

\begin{proposition} \label{surrogate_prop}
  If $\mathcal{L}$ is $(r,L)$-regular at $\phi$, then $\mathcal{L}$ is $2L$-smooth at $\phi$ in $B_r^{\infty}(\psi)$.
\end{proposition}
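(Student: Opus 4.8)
The plan is to establish the two-sided inequality of Definition~\ref{smooth_definition} for $\mathcal{L}$ at $\phi$ by proving its two one-sided halves separately, sandwiching $\mathcal{L}$ between a single \emph{fixed-critic surrogate} and the regularity estimate. Throughout I fix $\psi \in B_r^\infty(\phi)$ (so that $\phi \in B_r^\infty(\psi)$ as well, the sup-norm ball being symmetric) and introduce the surrogate
\[ A(\psi') \defeq \hat{\mathcal{L}}(f_\phi^*, \psi'), \]
the objective with the critic frozen at the maximizer $f_\phi^*$ for $\phi$. Two elementary facts drive the argument: by optimality of the max over $\mathcal{F}$ we have $A(\psi') = \hat{\mathcal{L}}(f_\phi^*,\psi') \le \max_{f} \hat{\mathcal{L}}(f,\psi') = \mathcal{L}(\psi')$ for every $\psi'$, with equality at $\psi' = \phi$.

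First I would show $A$ is $L$-smooth at $\phi$ with $\nabla_{\psi'} A(\phi) = \nabla_\phi\mathcal{L}(\phi)$. Since only the term $-\mathbb{E}_{\mu_g}[f_\phi^*(\psi'(z))]$ depends on $\psi'$, I apply the assumed $L$-smoothness of $f_\phi^*$ in its input $x$ pointwise at $x = \phi(z)$ and $x' = \psi'(z)$, then integrate over $z \sim \mu_g$ using $|\mathbb{E}_{\mu_g}[\cdot]| \le \mathbb{E}_{\mu_g}[|\cdot|]$: the pointwise quadratic remainder becomes $\tfrac{L}{2}\|\psi'-\phi\|^2_{L^2(\mu_g)}$ and the pointwise linear term becomes the $L^2(\mu_g)$ inner product against $-\nabla_x f_\phi^*\circ\phi = \nabla_\phi\mathcal{L}(\phi)$. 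This yields the \textbf{lower bound} at once, since $\mathcal{L}\ge A$ and $A(\phi)=\mathcal{L}(\phi)$:
\[ \mathcal{L}(\psi) \ge A(\psi) \ge \mathcal{L}(\phi) + \langle \nabla_\phi\mathcal{L}(\phi), \psi-\phi\rangle_{L^2(\mu_g)} - \tfrac{L}{2}\|\psi-\phi\|^2_{L^2(\mu_g)}. \]

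For the \textbf{upper bound} I would invoke the regularity hypothesis. Set $G_\psi(\psi') \defeq \hat{\mathcal{L}}(f_{\psi'}^*, \psi)$, which by Definition~\ref{regular_definition} is $L$-smooth at $\psi'=\psi$. The crucial observation is that $\psi'=\psi$ is a \emph{global} maximizer of $G_\psi$ over all of $L^2(\mu_g)$, because $G_\psi(\psi') = \hat{\mathcal{L}}(f_{\psi'}^*,\psi) \le \max_{f} \hat{\mathcal{L}}(f,\psi) = \mathcal{L}(\psi) = G_\psi(\psi)$; since $G_\psi$ is differentiable there, its gradient vanishes, $\nabla_{\psi'}G_\psi(\psi)=0$. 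Hence the smoothness inequality collapses to $|G_\psi(\psi') - \mathcal{L}(\psi)| \le \tfrac{L}{2}\|\psi'-\psi\|^2_{L^2(\mu_g)}$, and evaluating at $\psi'=\phi$ (noting $G_\psi(\phi) = \hat{\mathcal{L}}(f_\phi^*,\psi) = A(\psi)$) gives $\mathcal{L}(\psi) \le A(\psi) + \tfrac{L}{2}\|\psi-\phi\|^2_{L^2(\mu_g)}$. Chaining this with the upper half of the $L$-smoothness of $A$ produces
\[ \mathcal{L}(\psi) \le \mathcal{L}(\phi) + \langle \nabla_\phi\mathcal{L}(\phi), \psi-\phi\rangle_{L^2(\mu_g)} + L\|\psi-\phi\|^2_{L^2(\mu_g)}, \]
and together with the lower bound this is precisely $2L$-smoothness at $\phi$.

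I expect the main obstacle to be the correct handling of Definition~\ref{regular_definition} rather than the estimates themselves: one must notice that it varies the critic-inducing argument $\psi'$ while freezing the data argument, and that this particular function attains its maximum at the base point, so its first-order term vanishes and the $L$-smoothness reduces to pure quadratic control. A secondary point needing care, in the infinite-dimensional $L^2(\mu_g)$ setting, is the rigorous existence of the directional derivatives of $G_\psi$ and $A$ and the use of the envelope identity $\nabla_\phi\mathcal{L}(\phi) = -\nabla_x f_\phi^*\circ\phi$; these rest on the Lipschitz, differentiability, and uniqueness assumptions already in force, together with dominated convergence to differentiate under $\mathbb{E}_{\mu_g}$.
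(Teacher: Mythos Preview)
Your proposal is correct and follows essentially the same route as the paper: the paper also decomposes $\mathcal{L}(\psi)$ as $\hat{\mathcal{L}}(f_\phi^*,\psi)+\bigl(\hat{\mathcal{L}}(f_\psi^*,\psi)-\hat{\mathcal{L}}(f_\phi^*,\psi)\bigr)$, bounds the first term via pointwise $L$-smoothness of $f_\phi^*$ integrated against $\mu_g$, and bounds the second via the regularity hypothesis together with the vanishing gradient of $\psi'\mapsto\hat{\mathcal{L}}(f_{\psi'}^*,\psi)$ at its maximizer $\psi'=\psi$. The resulting asymmetric bounds (constant $L$ above, $L/2$ below) and the chaining are identical to yours; your introduction of the surrogates $A$ and $G_\psi$ merely names what the paper writes inline.
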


We now show the convergence of Algorithm \ref{FGD}.
The following theorem gives the rate to converge to the stationary point.

\begin{theorem} \label{convergence_theorem}
  Let us assume the norm of the gradient $\|\nabla_x f_\phi^*(x)\|_2$ is uniformly bounded by $\alpha$
  and assume $\mathcal{L}$ is $L$-smooth at $\phi$ in $B_r^\infty(\phi)$ for $\forall \phi \in L^2(\mu_g)$.
  Suppose we run Algorithm \ref{FGD} with constant learning rate $\eta \leq \min\{1/L,r/\alpha\}$.
  Then we have for $T\in \mathbb{Z}_+$
  \begin{equation*}
    \min_{k \in \{0,\ldots,T-1\}}\|\nabla_\phi \mathcal{L}(\phi_k)\|^2_{L^2(\mu_g)} \leq \frac{2}{\eta T}(\mathcal{L}(\phi_0) - \mathcal{L}_*),
  \end{equation*}
  where $\mathcal{L}_*= \inf_\phi \mathcal{L}(\phi)$.
\end{theorem}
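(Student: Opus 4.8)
The plan is to mirror the classical \emph{descent-lemma} argument for gradient descent on smooth nonconvex objectives, transplanted from $\mathbb{R}^d$ to the Hilbert space $L^2(\mu_g)$. The heart of the proof is a per-iteration \emph{sufficient decrease} inequality, which I then telescope over $k=0,\ldots,T-1$ and convert into a bound on the smallest gradient norm by the elementary fact that a minimum is at most an average.

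The one genuinely nonstandard point, and the step I would carry out first, is to verify that each update remains inside the sup-norm ball on which smoothness is assumed; the smoothness hypothesis is phrased over $B_r^\infty(\phi)$ while the descent estimate is measured in the $L^2(\mu_g)$-norm, so the two must be reconciled. Since the functional gradient has the explicit form $\nabla_\phi \mathcal{L}(\phi_k) = -\nabla_x f_{\phi_k}^* \circ \phi_k$ and $\|\nabla_x f_\phi^*(x)\|_2$ is uniformly bounded by $\alpha$, the update displacement satisfies $\|\phi_{k+1}-\phi_k\|_{L^\infty(\mu_g)} = \eta \sup_{{\rm supp}(\mu_g)}\|\nabla_x f_{\phi_k}^*(\phi_k(z))\|_2 \le \eta\alpha \le r$, where the final inequality is exactly where the constraint $\eta \le r/\alpha$ enters. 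Hence $\phi_{k+1} \in B_r^\infty(\phi_k)$, so the $L$-smoothness of $\mathcal{L}$ at $\phi_k$ may legitimately be evaluated at the next iterate $\phi_{k+1}$. The same bound also gives $\|\nabla_\phi\mathcal{L}(\phi)\|_{L^2(\mu_g)}\le\alpha$, which is consistent throughout.

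With containment secured, I would invoke the upper-bound half of Definition \ref{smooth_definition} with $z=\phi_k$, $z'=\phi_{k+1}$ and substitute the update rule $\phi_{k+1}-\phi_k=-\eta\nabla_\phi\mathcal{L}(\phi_k)$ to obtain
\begin{equation*}
  \mathcal{L}(\phi_{k+1}) \le \mathcal{L}(\phi_k) - \eta\Bigl(1-\tfrac{L\eta}{2}\Bigr)\|\nabla_\phi\mathcal{L}(\phi_k)\|^2_{L^2(\mu_g)}.
\end{equation*}
The condition $\eta\le 1/L$ forces $1-\tfrac{L\eta}{2}\ge\tfrac12$, yielding the decrease $\mathcal{L}(\phi_{k+1})\le\mathcal{L}(\phi_k)-\tfrac{\eta}{2}\|\nabla_\phi\mathcal{L}(\phi_k)\|^2_{L^2(\mu_g)}$. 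Summing this telescoping inequality from $k=0$ to $T-1$ and bounding $\mathcal{L}(\phi_T)$ from below by $\mathcal{L}_* = \inf_\phi\mathcal{L}(\phi)$ gives $\tfrac{\eta}{2}\sum_{k=0}^{T-1}\|\nabla_\phi\mathcal{L}(\phi_k)\|^2_{L^2(\mu_g)} \le \mathcal{L}(\phi_0)-\mathcal{L}_*$. Dividing by $T$ and using $\min_k(\cdot)\le\tfrac1T\sum_k(\cdot)$ then produces the stated $\mathcal{O}(1/T)$ rate.

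The main obstacle is not the telescoping calculus, which is routine, but the careful bookkeeping of \emph{which norm controls which step}: the ball constraint $\eta\le r/\alpha$ is needed purely so that the displacement stays small in $L^\infty(\mu_g)$ and smoothness is applicable, whereas the decrease itself lives in $L^2(\mu_g)$. I would take care to state both norm bounds explicitly and to note that they follow from the single uniform bound $\alpha$ on the critic gradient together with the closed form of $\nabla_\phi\mathcal{L}$. Everything else is the standard finite-dimensional argument carried over verbatim, since the inner-product and norm structure of $L^2(\mu_g)$ makes Definition \ref{smooth_definition} formally identical to its Euclidean counterpart.
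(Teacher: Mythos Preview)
Your proposal is correct and follows essentially the same route as the paper's own proof: verify $\phi_{k+1}\in B_r^\infty(\phi_k)$ via the bound $\|\eta\nabla_\phi\mathcal{L}(\phi_k)\|_{L^\infty(\mu_g)}\le\eta\alpha\le r$, apply the $L$-smoothness inequality to get the per-step decrease $\mathcal{L}(\phi_{k+1})\le\mathcal{L}(\phi_k)-\tfrac{\eta}{2}\|\nabla_\phi\mathcal{L}(\phi_k)\|_{L^2(\mu_g)}^2$, then telescope and divide by $T$. If anything, your write-up is more explicit than the paper's about the $L^\infty$-versus-$L^2$ bookkeeping and about the final step replacing $\mathcal{L}(\phi_T)$ by $\mathcal{L}_*$ and passing from the average to the minimum.
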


Note that the convergence rate $O(1/T)$ is the same as the gradient descent method for smooth objective in the finite-dimensional one.
This means that even though the optimization is executed in the infinite-dimensional space,
we do not suffer from the infinite dimensionality in terms of the convergence.

The following rough argument indicates that Algorithm \ref{FGD} matches with learning WGANs.
Let $W_1$ denote the $1$-Wasserstein distance with respect to the Euclidean distance on a compact base space $\mathcal{X} \subset \mathbb{R}^v$.
The following proposition is immediately shown by combining existing results \cite{ambrosio2003lecture,sudakov1979geometric}.

\begin{proposition} \label{opt_wasserstein_prop}
  Let $\mu_g$ be a Borel probability measure on $\mathcal{X}$ and assume $\mu_g$ is absolutely continuous with respect to the Lebesgue measure.
  Then, there exists an optimal transport $\psi$ and it follows that $W_1(\psi_{t\sharp}\mu_g, \mu_D) = (1-t)W_1(\mu_g,\mu_D)$, where $\psi_t = (1-t)id+t\psi$.
\end{proposition}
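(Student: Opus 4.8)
The plan is to first secure the existence of a Monge-optimal transport map and then obtain the claimed identity by sandwiching $W_1(\psi_{t\sharp}\mu_g,\mu_D)$ between two matching bounds. The existence step is where the cited results do the heavy lifting, and I expect it to be the main (indeed the only genuinely nontrivial) obstacle: for the Euclidean ($L^1$) cost the optimal transport problem need not admit a map in general, but absolute continuity of $\mu_g$ with respect to Lebesgue measure is precisely the hypothesis under which Sudakov's construction---as rigorously completed in Ambrosio's lecture notes---produces an optimal transport map $\psi$ satisfying $\psi_\sharp\mu_g=\mu_D$ and $\int_{\mathcal{X}}\|\psi(x)-x\|_2\,d\mu_g(x)=W_1(\mu_g,\mu_D)$. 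I would simply quote this as the input from \cite{sudakov1979geometric,ambrosio2003lecture}. I would also note that, since $\mathcal{X}$ is compact and convex and $t\in[0,1]$, the interpolant $\psi_t(x)=(1-t)x+t\psi(x)$ stays in $\mathcal{X}$, so $\psi_{t\sharp}\mu_g$ is a well-defined Borel probability measure on $\mathcal{X}$ and every $W_1$ distance below is finite.

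For the upper bound I would exhibit an explicit coupling of $\psi_{t\sharp}\mu_g$ and $\mu_D$. The joint map $x\mapsto(\psi_t(x),\psi(x))$ pushes $\mu_g$ forward to a measure on $\mathcal{X}\times\mathcal{X}$ whose first marginal is $\psi_{t\sharp}\mu_g$ and whose second marginal is $\psi_\sharp\mu_g=\mu_D$, hence is admissible. Using the algebraic identity $\psi(x)-\psi_t(x)=(1-t)(\psi(x)-x)$, its transport cost is $(1-t)\int_{\mathcal{X}}\|\psi(x)-x\|_2\,d\mu_g(x)=(1-t)W_1(\mu_g,\mu_D)$, which yields $W_1(\psi_{t\sharp}\mu_g,\mu_D)\le(1-t)W_1(\mu_g,\mu_D)$.

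For the matching lower bound I would combine the triangle inequality for $W_1$ with the symmetric estimate on the first leg of the interpolation. The coupling $x\mapsto(x,\psi_t(x))$ gives $W_1(\mu_g,\psi_{t\sharp}\mu_g)\le\int_{\mathcal{X}}\|\psi_t(x)-x\|_2\,d\mu_g(x)=t\,W_1(\mu_g,\mu_D)$; substituting this into $W_1(\mu_g,\mu_D)\le W_1(\mu_g,\psi_{t\sharp}\mu_g)+W_1(\psi_{t\sharp}\mu_g,\mu_D)$ and rearranging produces $(1-t)W_1(\mu_g,\mu_D)\le W_1(\psi_{t\sharp}\mu_g,\mu_D)$. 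Combining the two bounds gives the desired equality. Conceptually this is just the statement that McCann's displacement interpolation along the optimal map traverses a constant-speed $W_1$-geodesic, but once the optimal map is available the argument reduces to the two one-line coupling estimates above, so no further delicate analysis is needed.
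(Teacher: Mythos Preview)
Your proof is correct and follows essentially the same approach as the paper: invoke Sudakov/Ambrosio for the existence of the optimal map, use explicit push-forward couplings along the displacement interpolation for the upper bounds, and the triangle inequality for the matching lower bound. The paper in fact proves the slightly more general identity $W_1(\psi_{s\sharp}\mu_g,\psi_{t\sharp}\mu_g)=(t-s)W_1(\mu_g,\mu_D)$ via a three-term triangle inequality, but the idea is identical and specializes to exactly your argument when one endpoint is taken to be $\mu_D$.
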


The notion of the optimal transport is briefly introduced in Appendix.
By this proposition, there exists a curve $\psi_t$ strictly reducing distance, i.e., $dW_1(\psi_{t\sharp}\mu_g, \mu_D)/dt < 0$ if $\mu_g \centernot= \mu_D$.
Because $\mathcal{L}$ approximates $W_1$, it is expected that $d \mathcal{L}(\psi_t)/dt < 0$ when $\mu_g$ differs from $\mu_D$.
Noting that $d \mathcal{L}(\psi_t)/dt = \pd<\nabla_\phi \mathcal{L}(\psi_t),\psi - id>_{L^2(\mu_g)}$, the functional gradient $\nabla_\phi \mathcal{L}(\psi_t) \centernot = 0$
does not vanish and the objective $\mathcal{L}$ may be strictly reduced by Algorithm \ref{FGD}.

\section{Gradient Flow Perspective}
In Euclidean space, the step of the steepest descent method for minimizing problems can be derived by the discretization of the gradient flow $d\gamma(t)/dt=-\nabla_xF(\gamma(t))$ where $F$ is an objective function on Euclidean space.
Because our goal is to move $\mu_g$ closer to $\mu_D$, we should consider a gradient flow in the space of probability measures.
To make this argument rigorously, we need the continuity equation that characterizes a curve of probability measures and the tangent space where velocities of curves should be contained (c.f., \cite{ambrosio2008gradient}).
When these notions are provided, the gradient flow is defined immediately and it is quite natural to discretize this flow to track it well.
In this section, we show that Algorithm \ref{FGD} is such a natural discretization; in other words, building a deep neural network by stacking gradient layers is a discretization procedure of the gradient flow.
%Thus, we briefly introduce it here with simplified arguments.
We refer to \cite{ambrosio2008gradient} for detailed descriptions on this subject, and also refer to \cite{otto2001geometry} for an original method developed by Otto.

\subsection{Continuity Equation and Discretization}
We denote by $\mathcal{P}$ the set of probability measures on $\mathbb{R}^v$.
For $\mu \in \mathcal{P}$, let $\{\phi_t\}_{t\in [0,\delta]}$ be a curve in $L^2(\mu)$ that solves the following ordinary differential equation: for an $L^2(\phi_{t\sharp}\mu)$-integrable
vector field $v_t$ on $\mathbb{R}^v$, 
\begin{equation*}
  \phi_0 = id,\ \ \frac{d}{dt}\phi_t(x) = v_t( \phi_t(x))\ for \ \forall x \in \mathbb{R}^v.
\end{equation*}
Then, this equation derives the curve $\nu_t=\phi_{t\sharp}\mu$ in $\mathcal{P}$, which can be characterized by .
% ---- we omit the following for simplicity -----
% For simplicity, we assume that $\nu_t$ have smooth density functions $d\nu_t/dx$ with respect to $t\in [0,\delta]$.
% We denote by $\mathcal{C}_{c}^\infty(\mathbb{R}^v)$ the set of $\mathcal{C}^\infty$-functions with compact support in $\mathbb{R}^v$.
% Using integration by parts, we have that for $\forall f \in \mathcal{C}_{c}^\infty(\mathbb{R}^v)$,
% \begin{align*}
%   \int_{\mathbb{R}^v} f(x) \frac{d}{dt} \frac{d\nu_t}{dx}dx 
%   =- \int_{\mathbb{R}^v} f(x) \nabla \cdot (v_t d\nu_t), 
% \end{align*}
% where $\nabla\cdot$ is the divergence operator in the weak sense.
% That is, this equation implies the equality between $d\nu_t/dt$ and $- \nabla \cdot (v_t d\nu_t)$ as the distribution on $\mathcal{C}_{c}^\infty(\mathbb{R}^v)$. 
% In general, for a Borel family of probability measures $\nu_t$, on $\mathbb{R}^v$, defined for $t$ in the open interval $I\subset \mathbb{R}$
% and for a Borel vector field $v: (x,t) \rightarrow v_t(x) \in \mathbb{R}^v$, the following distribution equation in $\mathbb{R}^v \times I$,
% ---------
\begin{equation}
  \frac{d}{dt}\nu_t + \nabla \cdot (v_t\nu_t) = 0. \label{continuity_equation}
\end{equation}
%is called the {\it continuity equation}.
In other words, the following equation is satisfied
\begin{equation*}
  \int_{I} \int_{\mathbb{R}^v} (\partial_t f(x,t) + \nabla_{x}f(x,t)^\top v_t) d\nu_t dt = 0,
\end{equation*}
for $\forall f \in \mathcal{C}^{\infty}_c(\mathbb{R}^v \times I )$ where $\mathcal{C}_{c}^\infty(\mathbb{R}^v\times I)$ is the set of $\mathcal{C}^\infty$-functions with compact support in $\mathbb{R}^v\times I$.
Conversely, a {\it narrowly} continuous family of probability measures $\nu_t$ solving equation (\ref{continuity_equation}) can be obtained
by transport map $\phi_t$ satisfying $\frac{d}{dt}\phi_t(x) = v_t( \phi_t(x))$ \cite{ambrosio2008gradient}.
Thus, equation (\ref{continuity_equation}) indicates that $v_t$ drifts the probability measures $\nu_t$. %, i.e., $v_t$ is a velocity of $\nu_t$.
Indeed, $v_t$ can be recognized as the tangent vector of the curve $\nu_t$ as discussed below.

Here, we focus on curves in the subset $\mathcal{P}_2 \subset \mathcal{P}$ composed of probability measures with finite second moment.
Noting that there is freedom in the choice of $v_t$ modulo divergence-free vector fields $w \in L^2(\nu_t)$ (i.e., $\nabla\cdot(w\nu_t)=0$),
it is natural to consider the equivalence class of $v \in L^2(\nu_t)$ modulo divergence-free vector fields.
Moreover, there exists a unique $\Pi(v)$ that attains the minimum $L^2(\nu_t)$-norm in this class:
$\Pi(v)=\arg\min_{w \in L^2(\nu_t)} \{ \|w\|_{L^2(\nu_t)} \mid \nabla\cdot((v-w)\nu_t)=0\}$.
Thus, we here introduce the definitions of the tangent space at $\mu \in \mathcal{P}_2$ as follows:
\begin{equation}
  T_\mu \mathcal{P}_2 \overset{\mathrm{def}}{=} \{ \Pi(v) \mid v \in L^2(\mu)\}. \label{tangent_space}
\end{equation}

The following proposition shows that $T_\mu \mathcal{P}_2$ has the property of the tangent space on the space of probability measures, that is,
a perturbation using $v_t \in T_\mu \mathcal{P}_2$ can discretize an absolutely continuous curve $\nu_t$ and $v_t$ locally approximates {\it optimal transport maps}.
We denote the $2$-Wasserstein distance by $W_2$.

\begin{proposition}[\cite{ambrosio2008gradient}] \label{ac_prop}
  Let $\nu_t: I \rightarrow \mathcal{P}_2$ be an absolutely continuous curve satisfying the continuity equation with a Borel vector field $v_t$ that is contained in $T_{\nu_t}\mathcal{P}_2$ almost everywhere $t \in I$.
  Then, for almost everywhere $t \in I$ the following property holds:
  \begin{equation*}
    \lim_{\delta \rightarrow 0} \frac{W_2(\nu_{t+\delta},(id+\delta v_t)_\sharp \nu_t)}{|\delta|} = 0.
  \end{equation*}
  In particular, for almost everywhere $t\in I$ such that $\nu_t$ is absolutely continuous with respect to the Lebesgue measure, we have
  \begin{equation*}
    \lim_{\delta \rightarrow 0} \frac{1}{\delta} (\mathbf{t}_{\nu_t}^{\nu_{t+\delta}}- id) = v_t \quad in\ L^2(\nu_t),
  \end{equation*}
  where $\mathbf{t}_{\nu_t}^{\nu_{t+\delta}}$ is the unique optimal transport map between $\nu_t$ and $\nu_{t+\delta}$.
\end{proposition}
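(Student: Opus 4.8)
The plan is to follow the optimal-transport approach of Ambrosio--Gigli--Savar\'e, combining the characteristics (flow) representation of the continuity equation with a Lebesgue-point argument in an $L^2$-valued (Bochner) sense, and then upgrading a weak limit to a strong one by matching norms. First I would represent the curve by its characteristics: since $\nu_t$ solves the continuity equation (\ref{continuity_equation}) with Borel velocity $v_t$, the superposition principle provides a probability measure on absolutely continuous paths $\gamma$ with time-marginals $\nu_t$ satisfying $\dot\gamma(s)=v_s(\gamma(s))$ for a.e.\ $s$. Writing $\phi_{t,s}$ for the resulting (a.e.-defined) flow from time $t$ to $s$, one has $\nu_s=\phi_{t,s\sharp}\nu_t$ and
\[
\phi_{t,t+\delta}(x)-x=\int_t^{t+\delta} v_s(\phi_{t,s}(x))\,ds .
\]
I use this representation rather than requiring an actual ODE flow, since $v_t$ is only $L^2$ and need not be Lipschitz.

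For the first limit I would exhibit an explicit coupling. The map $x\mapsto\bigl(\phi_{t,t+\delta}(x),\,x+\delta v_t(x)\bigr)$ pushes $\nu_t$ forward to a transport plan between $\nu_{t+\delta}=\phi_{t,t+\delta\sharp}\nu_t$ and $(id+\delta v_t)_\sharp\nu_t$, whence
\[
W_2^2\bigl(\nu_{t+\delta},(id+\delta v_t)_\sharp\nu_t\bigr)\le\int\Bigl|\int_t^{t+\delta}\bigl(v_s(\phi_{t,s}(x))-v_t(x)\bigr)\,ds\Bigr|^2 d\nu_t(x).
\]
Dividing by $\delta^2$ and applying Jensen's inequality in $s$ bounds the right-hand side by $\tfrac1\delta\int_t^{t+\delta}\|v_s\circ\phi_{t,s}-v_t\|_{L^2(\nu_t)}^2\,ds$, which tends to $0$ at every Lebesgue point $t$ of the $L^2(\nu_t)$-valued map $s\mapsto v_s\circ\phi_{t,s}$ (and hence for a.e.\ $t$). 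This Lebesgue-differentiation step, carried out in the Bochner sense along the moving base measure and relying on the measurability structure furnished by the superposition principle, is the main technical obstacle, and it is precisely what restricts the statement to almost every $t$.

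For the optimal-transport refinement, absolute continuity of $\nu_t$ guarantees by Brenier's theorem a unique optimal map $\mathbf{t}_{\nu_t}^{\nu_{t+\delta}}=\nabla\psi_\delta$, so each rescaled displacement $u_\delta\defeq\tfrac1\delta(\mathbf{t}_{\nu_t}^{\nu_{t+\delta}}-id)$ is a gradient, hence lies in $T_{\nu_t}\mathcal{P}_2$, with $\|u_\delta\|_{L^2(\nu_t)}=W_2(\nu_t,\nu_{t+\delta})/\delta$. The metric-derivative identity for absolutely continuous curves gives $\lim_{\delta\to0}W_2(\nu_t,\nu_{t+\delta})/|\delta|=\|v_t\|_{L^2(\nu_t)}$, so the $u_\delta$ are bounded with norms converging to $\|v_t\|$. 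To identify a weak limit $u$, I expand $\int\zeta\,d\nu_{t+\delta}-\int\zeta\,d\nu_t=\int\nabla\zeta\cdot(\mathbf{t}_{\nu_t}^{\nu_{t+\delta}}-id)\,d\nu_t+O(\|\mathbf{t}_{\nu_t}^{\nu_{t+\delta}}-id\|^2)$ for smooth $\zeta$; dividing by $\delta$ and using the continuity equation yields $\langle u_\delta,\nabla\zeta\rangle_{L^2(\nu_t)}\to\langle v_t,\nabla\zeta\rangle_{L^2(\nu_t)}$. Since gradients of smooth functions are dense in $T_{\nu_t}\mathcal{P}_2$, both $u$ and $v_t$ lie in this closed tangent space, and equality on all such test fields forces $u=v_t$; thus $u_\delta\rightharpoonup v_t$ weakly. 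Finally, weak convergence together with convergence of the norms implies strong convergence $u_\delta\to v_t$ in the Hilbert space $L^2(\nu_t)$, which is the asserted limit.
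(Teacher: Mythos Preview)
The paper does not actually prove this proposition: it is quoted verbatim from Ambrosio--Gigli--Savar\'e \cite{ambrosio2008gradient} (this is Proposition~8.4.6 in the second edition), and the appendix contains proofs only of Proposition~\ref{surrogate_prop}, Theorem~\ref{convergence_theorem}, and Proposition~\ref{opt_wasserstein_prop}. So there is no in-paper argument to compare your attempt against.

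That said, your sketch is a faithful outline of the standard AGS argument: the explicit coupling through the characteristics to control $W_2(\nu_{t+\delta},(id+\delta v_t)_\sharp\nu_t)$, the Lebesgue-point reasoning to get the $o(|\delta|)$ behaviour for a.e.\ $t$, and---for the second assertion---the combination of the metric-derivative identity $|\nu'|(t)=\|v_t\|_{L^2(\nu_t)}$, weak identification of the limit via test gradients, and the Hilbert-space fact that weak convergence plus convergence of norms gives strong convergence, are exactly the ingredients used in the reference. One technical point worth tightening: the superposition principle produces a probability measure $\eta$ on absolutely continuous curves with the correct time-marginals, not in general a deterministic flow map $x\mapsto\phi_{t,s}(x)$; your displayed estimates should therefore be written as integrals over the path variable $\gamma$ with respect to $\eta$ (i.e.\ replace $\phi_{t,s}(x)$ by $\gamma(s)$ and $d\nu_t(x)$ by $d\eta(\gamma)$). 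This is only a notational adjustment and does not affect the logic of the proof.
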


This proposition suggests the update $\mu^+ \leftarrow (id+v)_\sharp \mu$ for discretizing an absolutely continuous curve in $\mathcal{P}_2$.
%% Though, the above approximation is justified only for tangent vectors $\xi \in T_\mu \mathcal{P}_2$ in the proposition, we do not need such an explicit restriction in our analyses,
%% so we choose $\xi$ from the whole space $L^2(\mu)$ rather than $T_\mu\mathcal{P}_2$, in this update.
Note that when $\mu=\phi_\sharp \nu$, ($\nu \in \mathcal{P}_2, \phi \in L^2(\nu)$), the corresponding map to $\mu^+$ is obtained by $\phi^+_\sharp \nu = \mu^+$ where $\phi^+$ is a composition as follows:
\begin{equation}
  \phi^+ \leftarrow (id+v)\circ \phi = \phi + v\circ \phi. \label{update_of_transport_map}
\end{equation}

So far, we have introduced the property of continuous curves in $\mathcal{P}_2$ and a method of their discretization.
We notice that the above update resembles the update of Algorithm \ref{FGD}.
Indeed, we show that the functional gradient method is nothing but a discretization method of the gradient flow derived by the functional gradient $\nabla_\phi \mathcal{L}(\phi)$.

\subsection{Discretization of Gradient Flow}
We here introduce the gradient flow, which is one of the most straightforward ways to understand Algorithm \ref{FGD}.
We have explained that an absolutely continuous curve $\{\nu_t\}_{t\in I}$ in $\mathcal{P}_2$ is well characterized by the continuity equation (\ref{continuity_equation}) and we have seen that
$\{v_t\}_{t \in I}$ in (\ref{continuity_equation}) corresponds to the notion of the velocity field induced by the curve.
Such a velocity points in the direction of the particle flow.
Moreover, the functional gradient $\nabla_\phi \mathcal{L}(\phi)(\cdot)$ points in an opposite direction to reduce the objective $\mathcal{L}$ at each particle.
Thus, these two vector fields exist in the same space and it is natural to consider the following equation:
\begin{equation}
  v_t = -\nabla_\phi \mathcal{L}(\phi_t). \label{grad_flow}
\end{equation}
This equation for an absolutely continuous curve is called the gradient flow \cite{ambrosio2008gradient} and a curve satisfying this will reduce the objective $\mathcal{L}$.
Indeed, we can find by the chain rule such a curve $\{\nu_t=\phi_{t\sharp}\mu_g\}_{t \in I}$ that also satisfies the following:
\begin{equation*}
  \frac{d}{dt} \mathcal{L}(\phi_t) = - \| \nabla_\phi \mathcal{L}(\phi_t) \|_{L^2(\nu_t)}^2.
\end{equation*}

Recalling that $\nu_t$ can be discretized well by $\nu_{t+\delta} \sim (id-\delta \nabla_\phi \mathcal{L}(\phi_t))_\sharp \nu_t$,
we notice that Algorithm \ref{FGD} is a discretization method of the gradient flow (\ref{grad_flow}).
In other words, building deep neural networks by stacking gradient layers is such a discretization procedure.

\begin{figure*}[th]
  \begin{center} 
    \begin{tabular}{cccc|c}
 \includegraphics[width=24mm,angle=0]{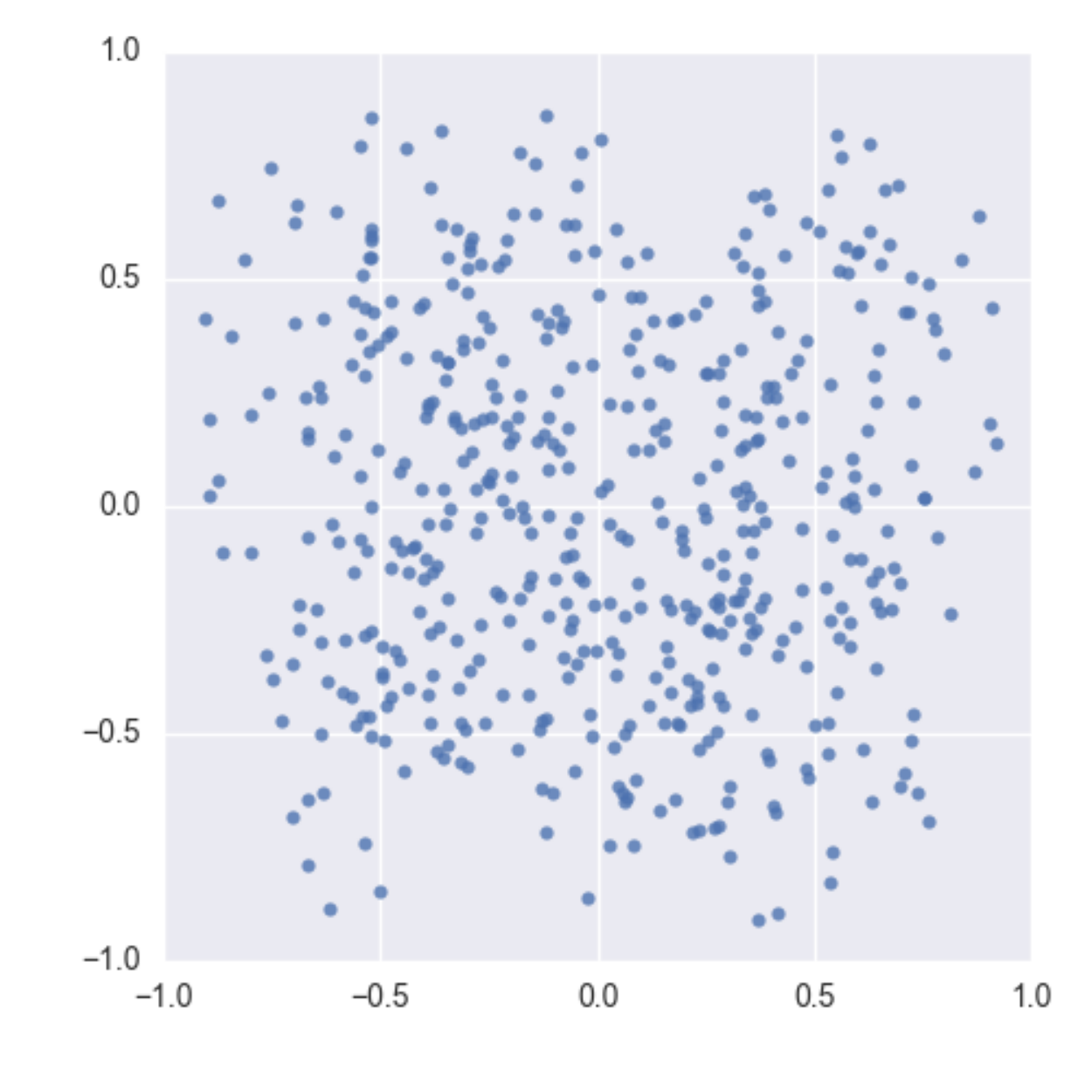} 
 \includegraphics[width=24mm,angle=0]{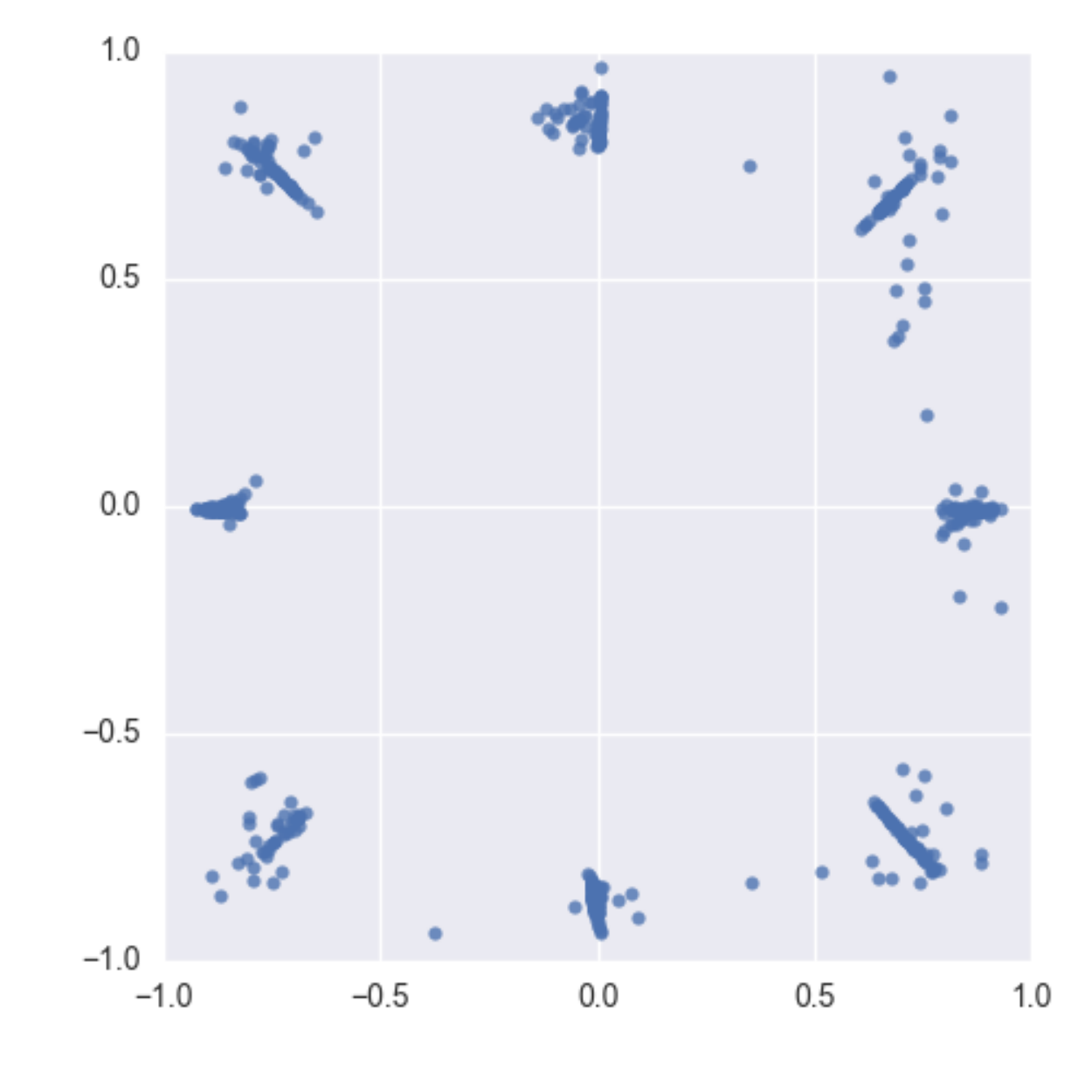} 
 \includegraphics[width=24mm,angle=0]{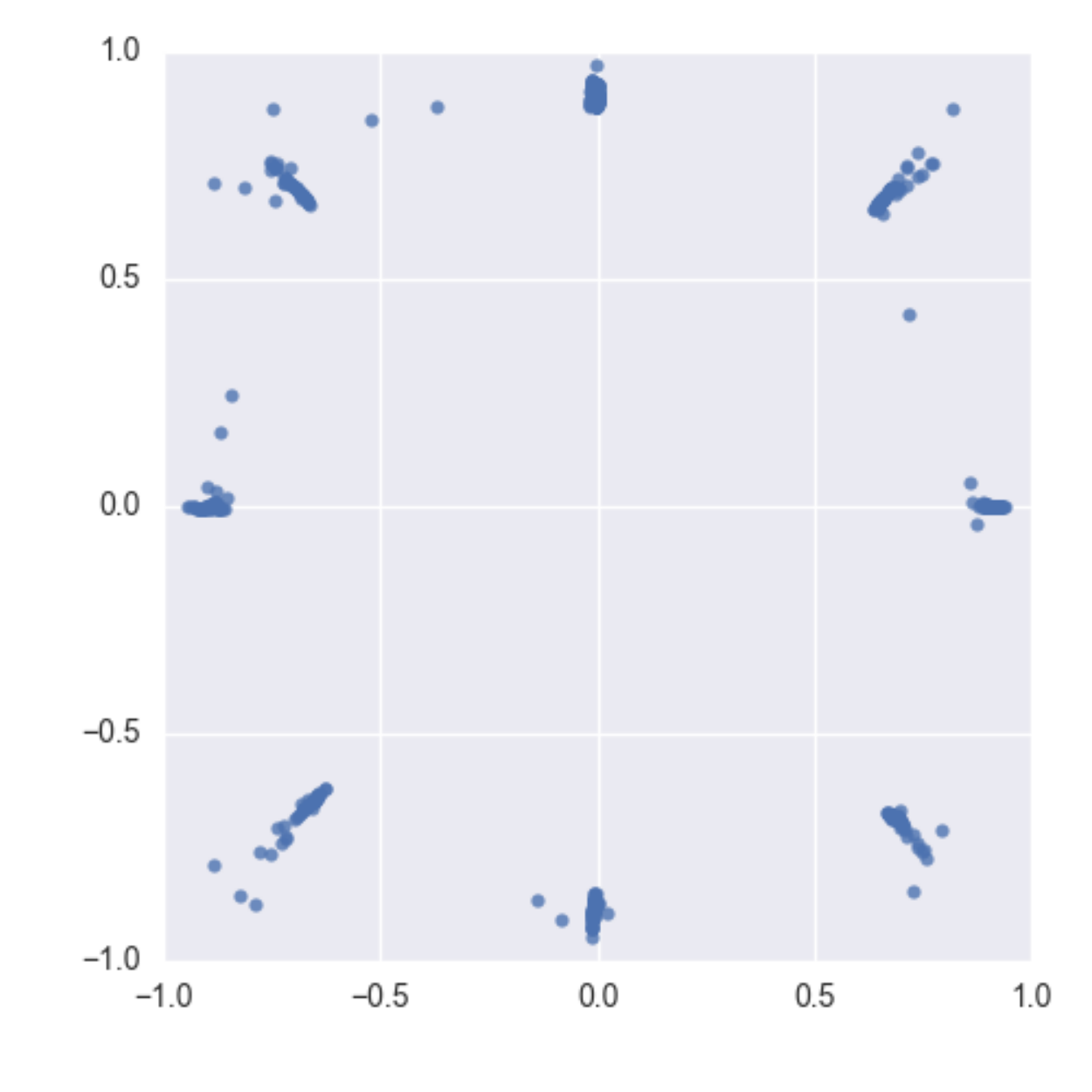} 
 \includegraphics[width=24mm,angle=0]{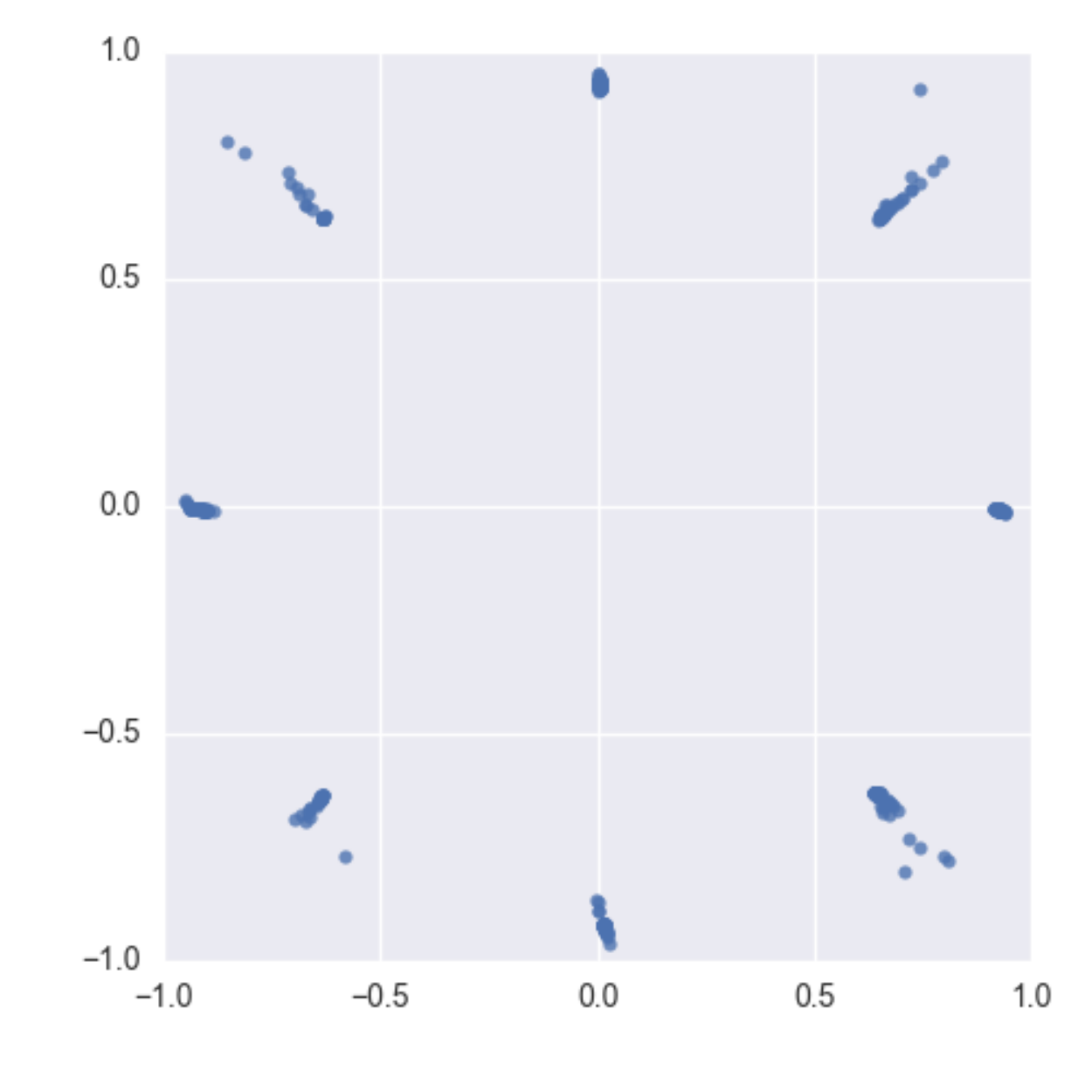} 
 \includegraphics[width=24mm,angle=0]{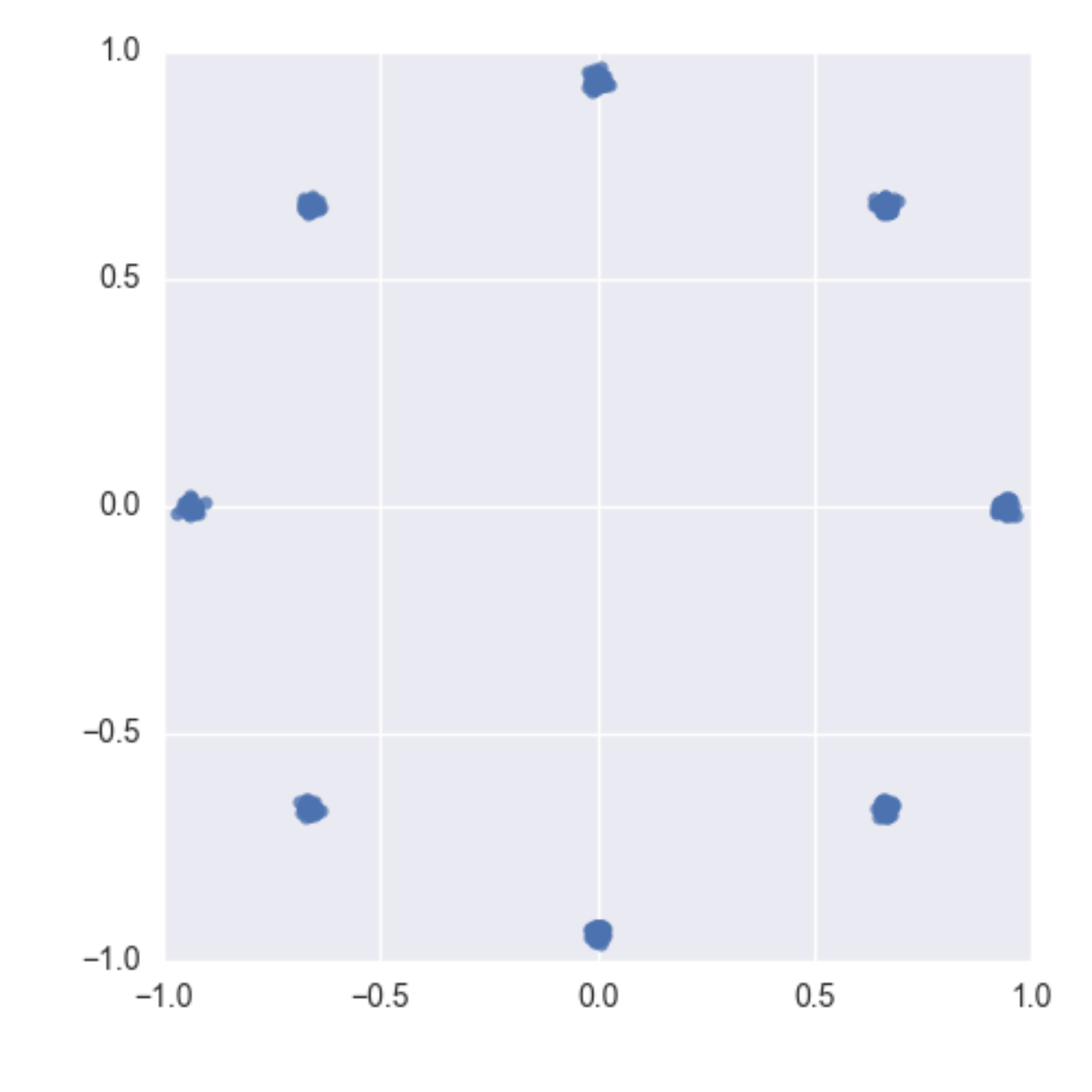}  
 \vspace{-4mm}\\
 \includegraphics[width=24mm,angle=0]{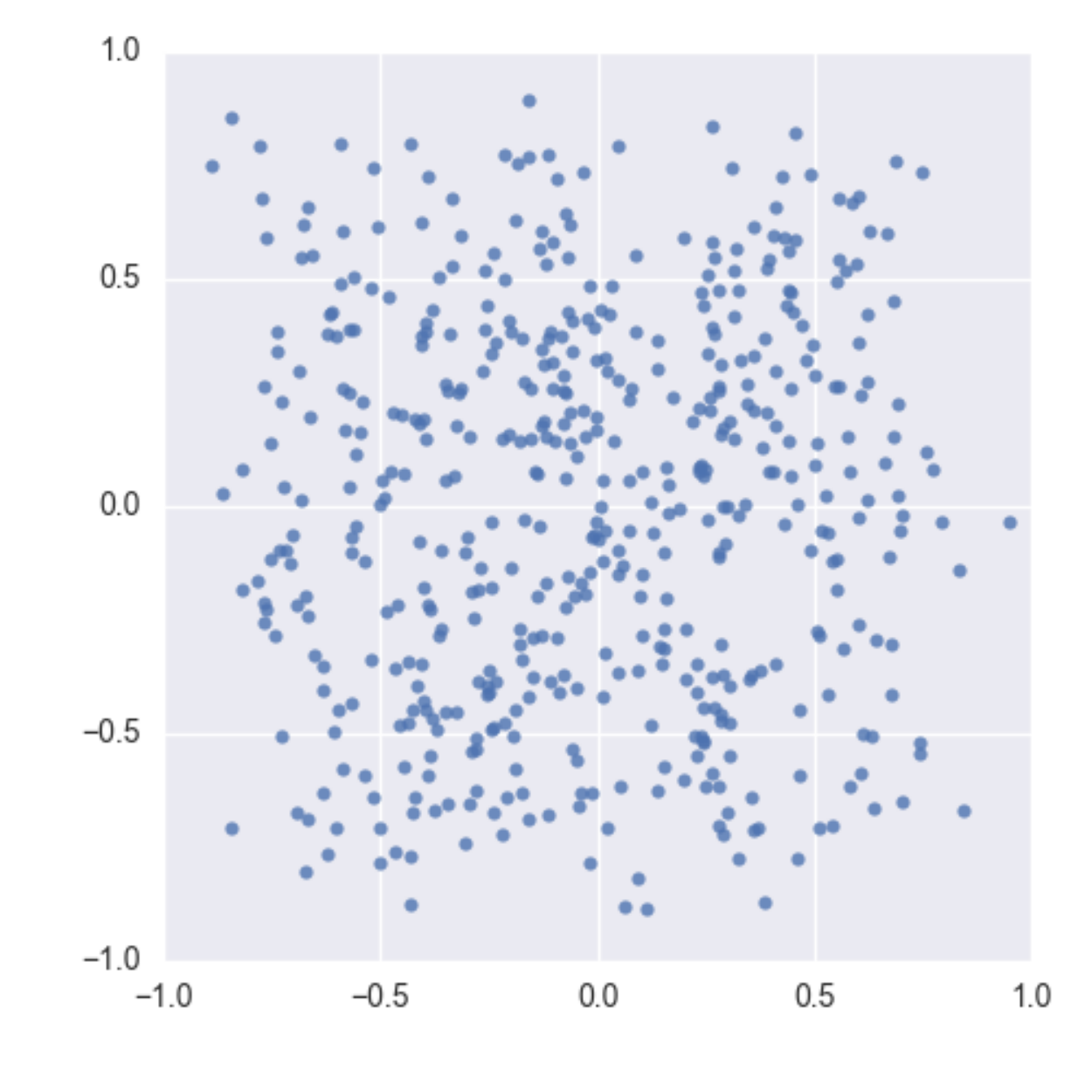} 
 \includegraphics[width=24mm,angle=0]{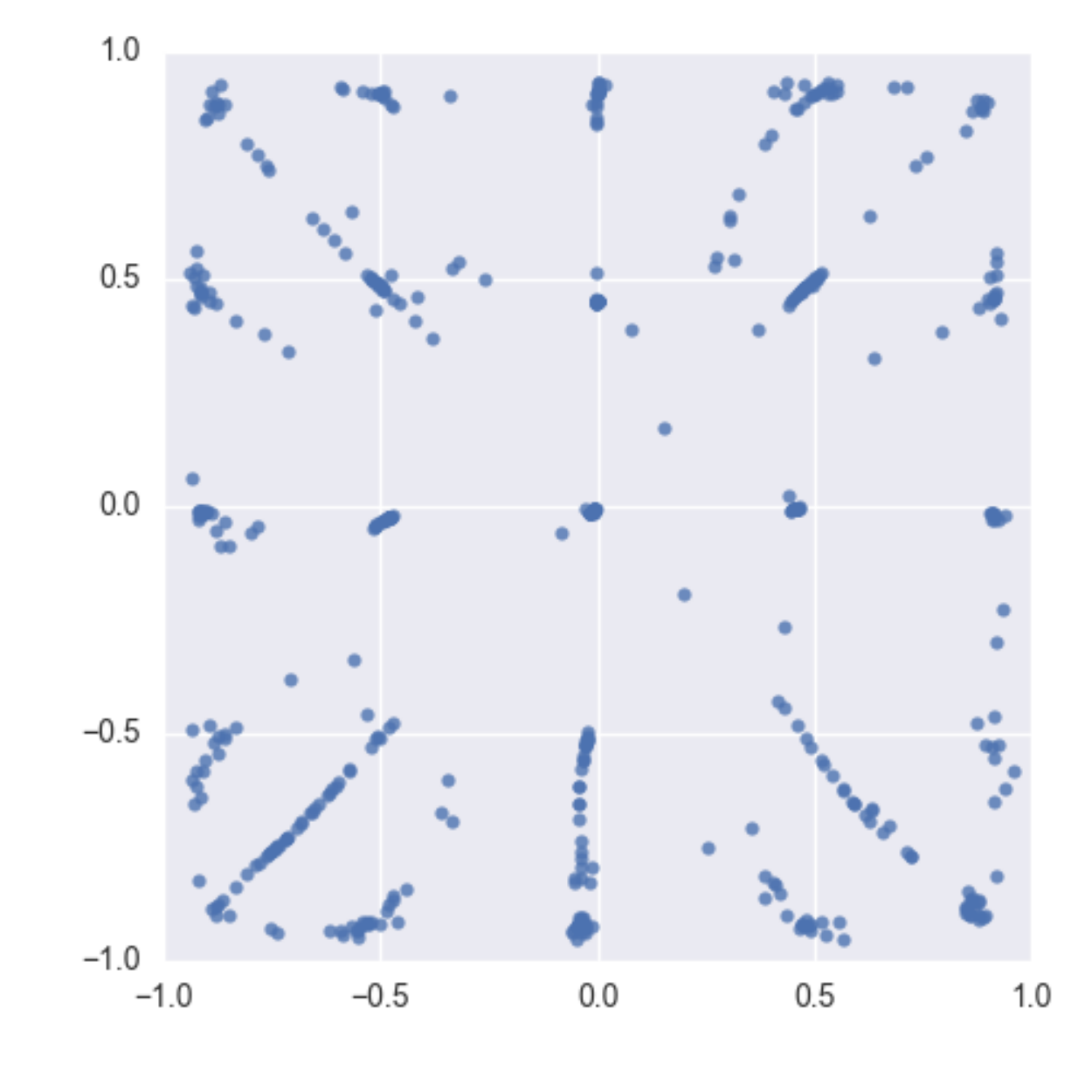} 
 \includegraphics[width=24mm,angle=0]{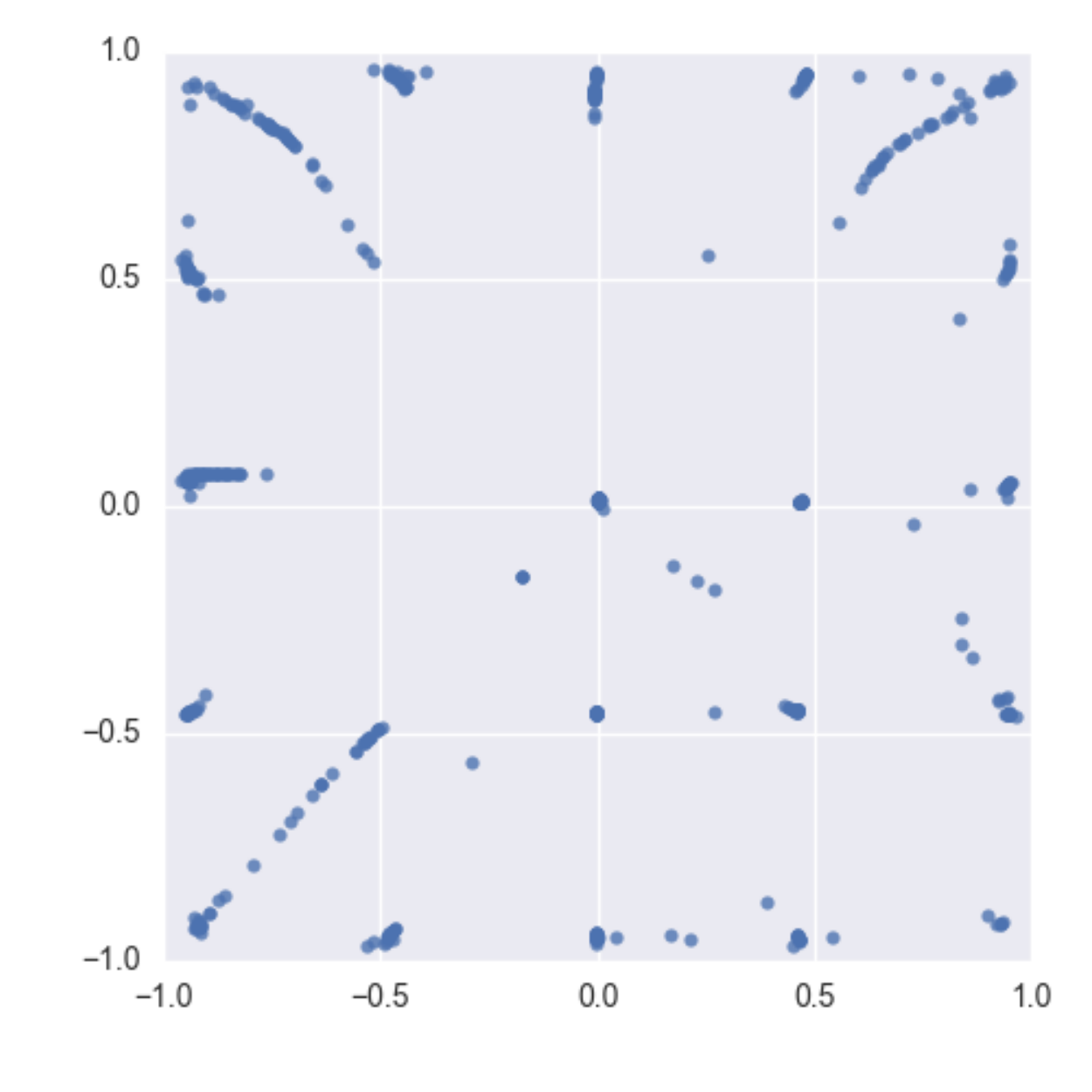} 
 \includegraphics[width=24mm,angle=0]{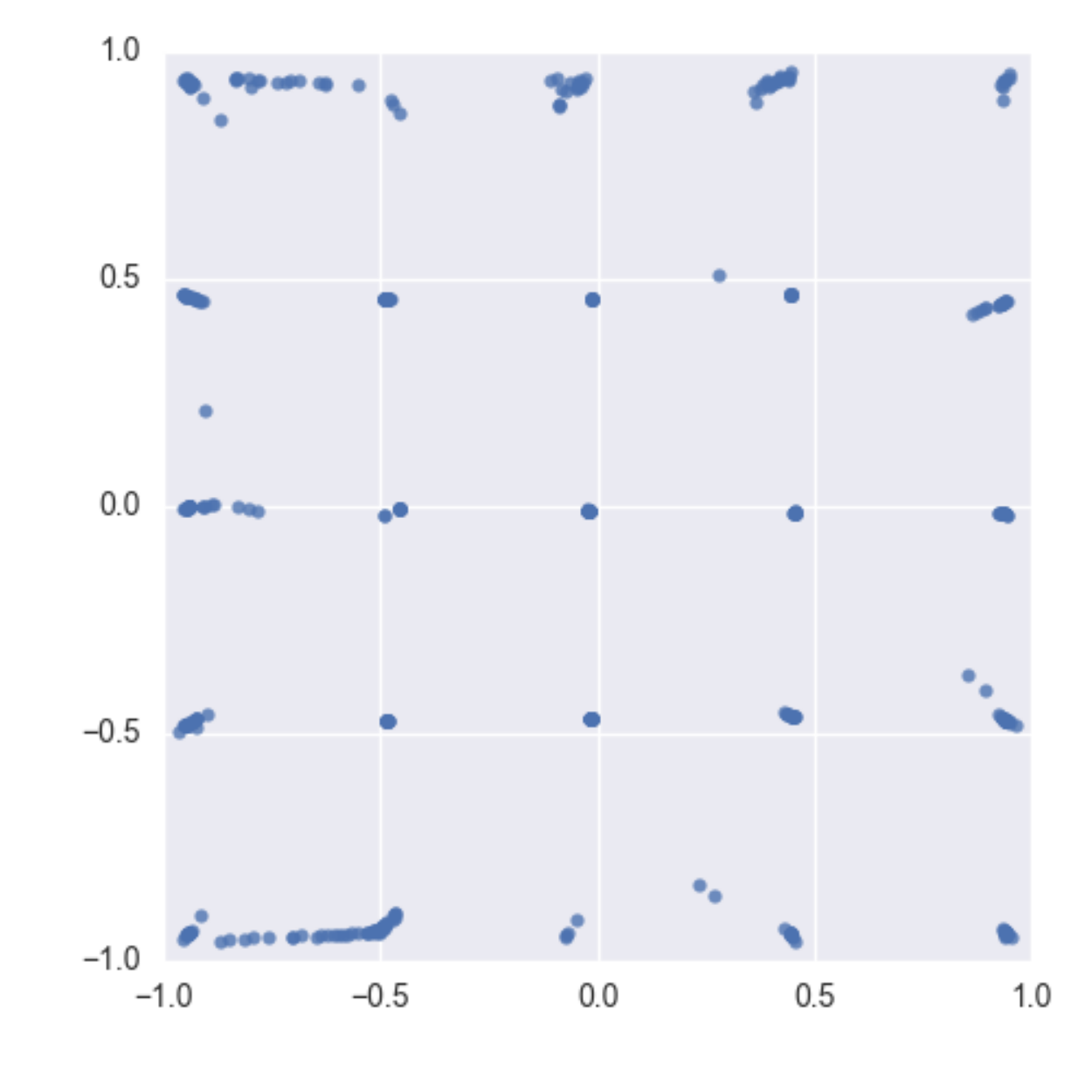} 
 \includegraphics[width=24mm,angle=0]{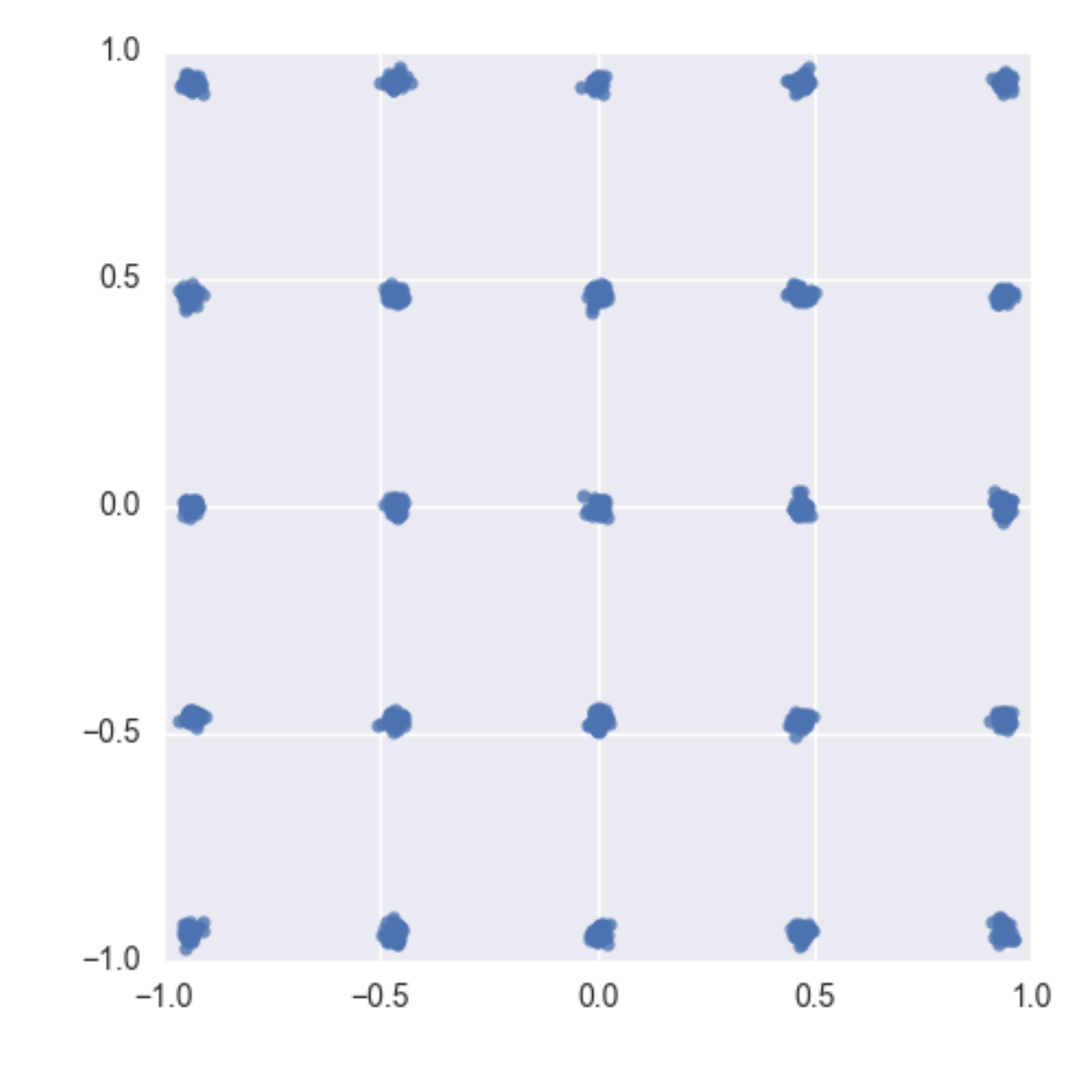}
 \vspace{-4mm}\\
 \includegraphics[width=24mm,angle=0]{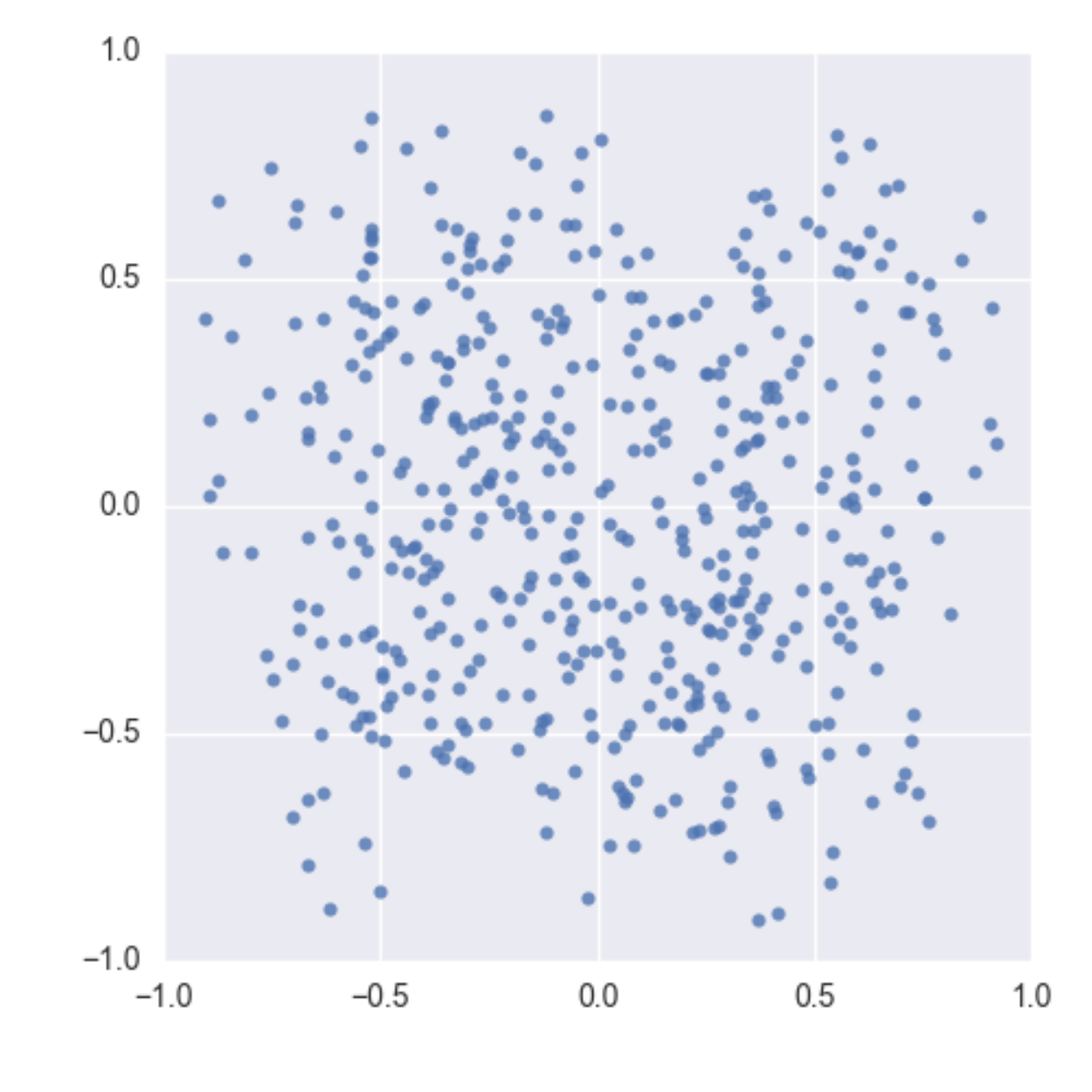} 
 \includegraphics[width=24mm,angle=0]{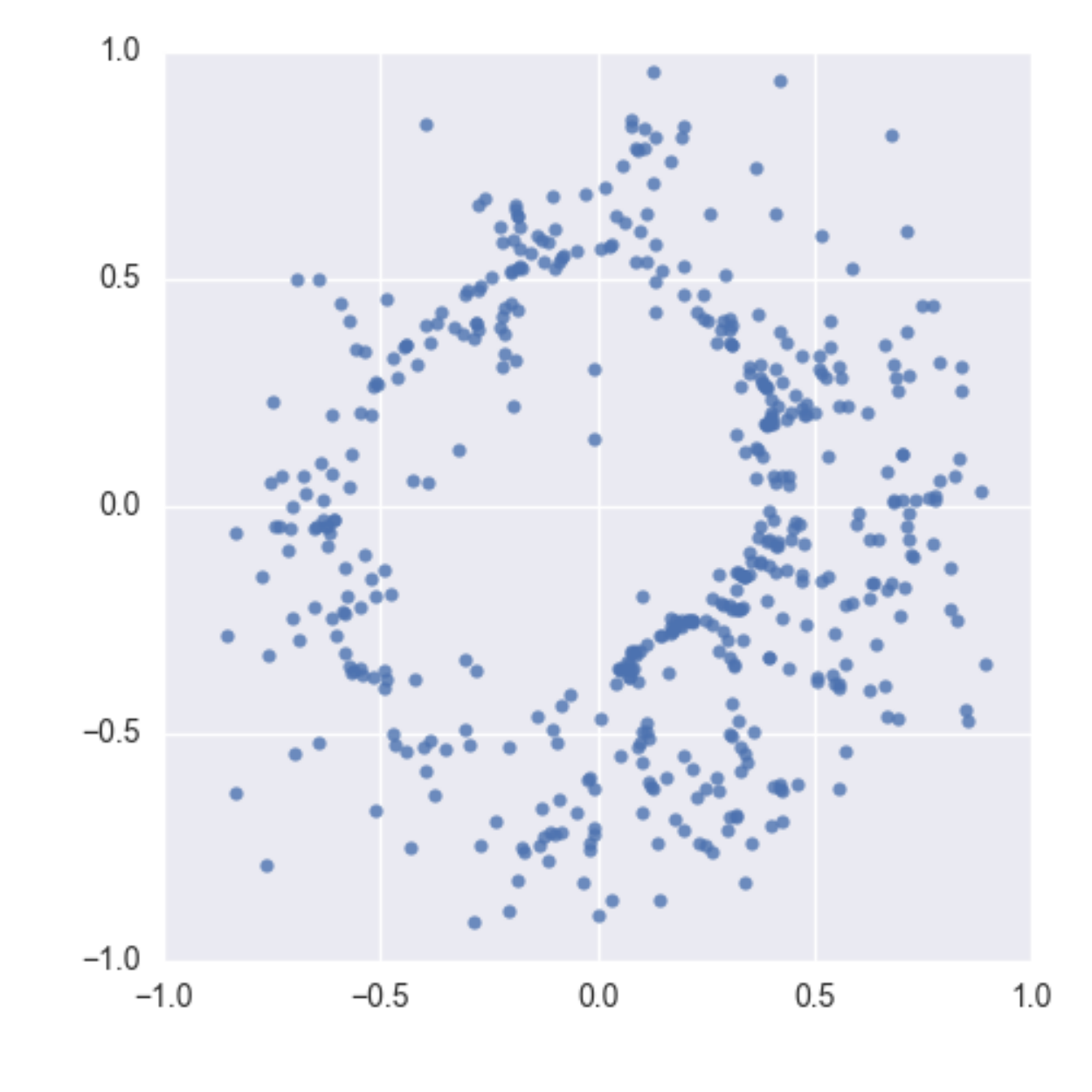} 
 \includegraphics[width=24mm,angle=0]{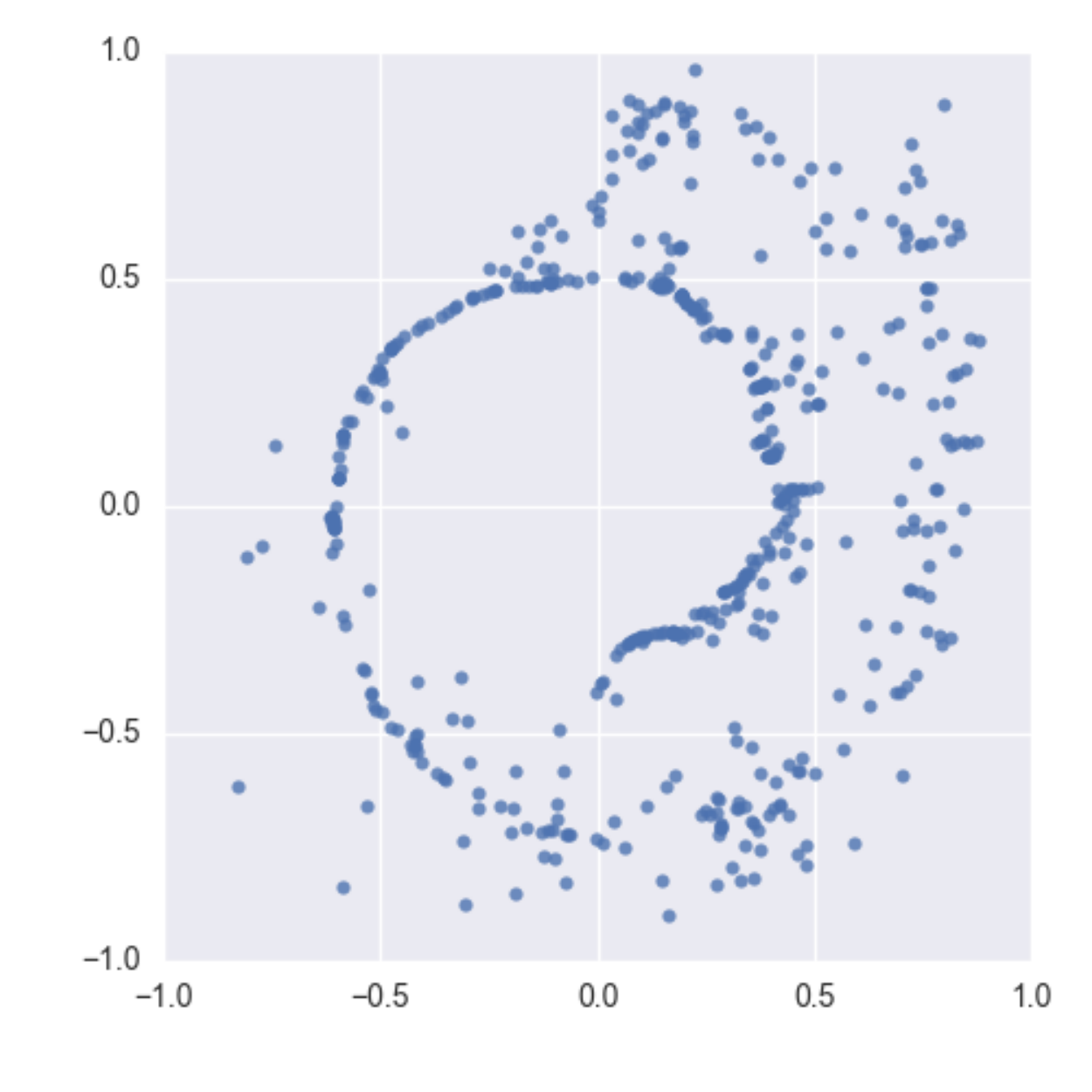} 
 \includegraphics[width=24mm,angle=0]{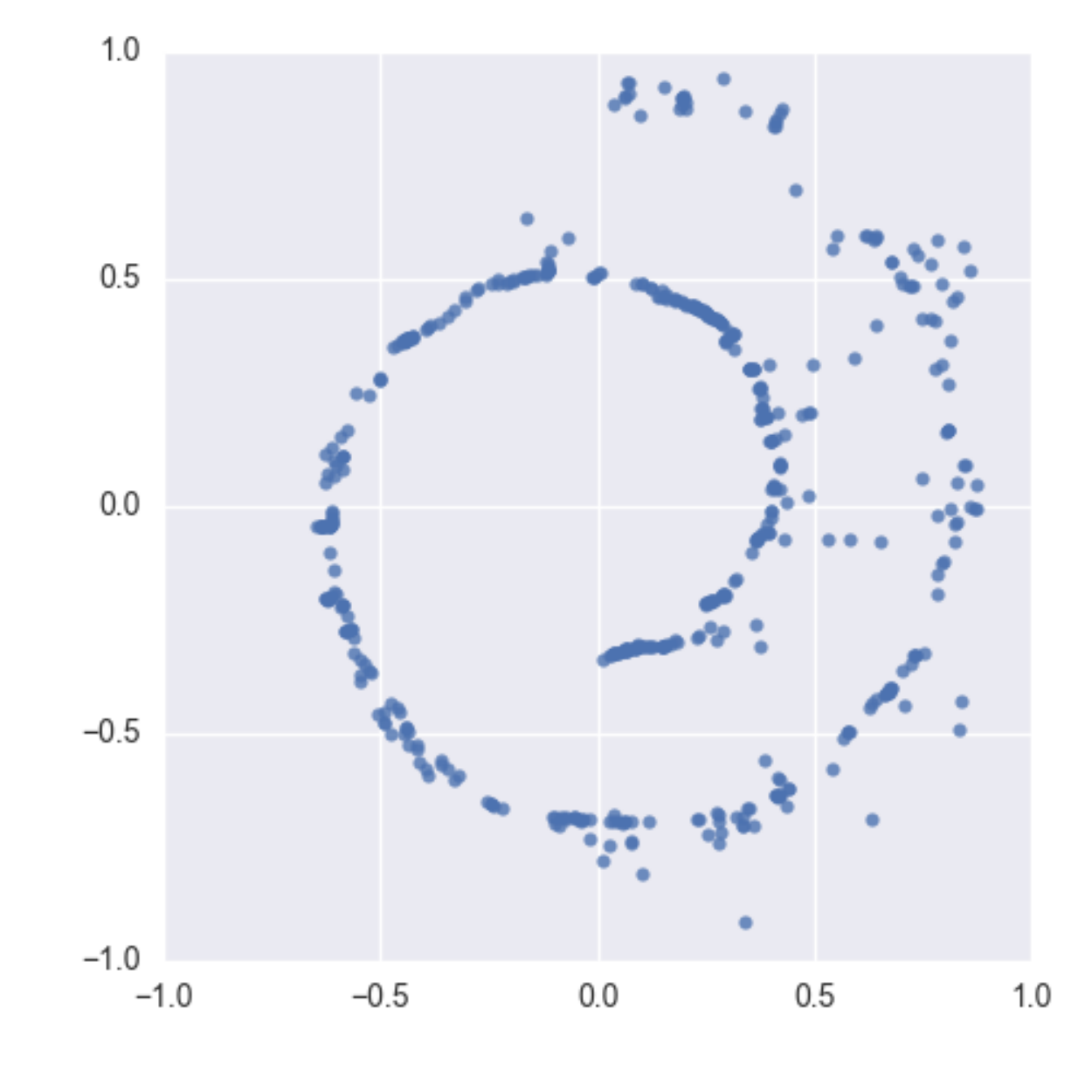} 
 \includegraphics[width=24mm,angle=0]{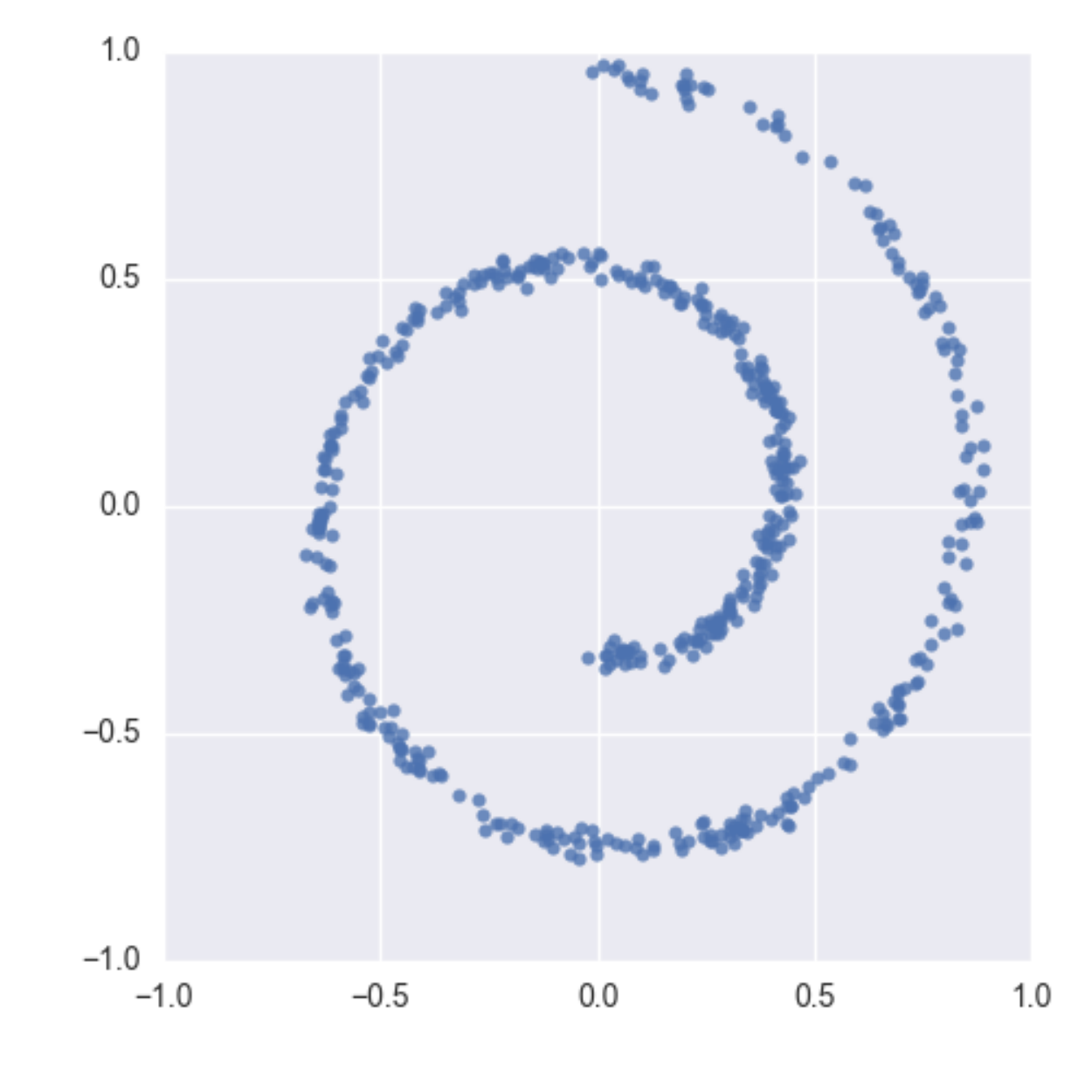} \\            
    \end{tabular}
    \caption{Generated samples by Algorithm \ref{WGAN_FT} on $8$-gaussian dataset for $0, 25, 50$, and $100$ generator iterations (from left to right) and training data (rightmost).} \label{toy_datasets}
\end{center}
\end{figure*}

\section{Experiments} \label{sec:experiments_sec}
In this section, we show the powerful optimization ability of the gradient layer method empirically on training WGANs.
Our implementation is done using Theano \cite{2016arXiv160502688short}.
We first used three toy datasets: swiss roll, $8$-gaussian, and $25$-gaussian datasets (see Figure \ref{toy_datasets}) to confirm the convergence behavior of the gradient layer.
The sizes of toy datasets are $500$, $500$, and $1000$, respectively.
We next used the CIFAR-10 containing 50,000 images of size 32$\times$32,
and STL-10 containing 100,000 images.
For STL-10 dataset, we downsample each dimension by 2, resulting image size is 48$\times$48.
We reported inception scores \cite{salimans2016improved} for image datasets, which is one of the conventional scores commonly used to measure the quality of generated samples. % in the GANs literature.
\paragraph{Toy datasets}
We ran Algorithm \ref{WGAN_FT} without pre-training of generators (i.e., $g=id$) on toy datasets from Gaussian noise distributions with the standard deviation $0.5$.
We used four-layer neural networks for the critics where the dimension of hidden layers were set to $128$ for swiss roll and $8$-gaussian datasets and $512$ for $25$-gaussian dataset.
We adopted one-sided penalty with regularization parameter $\lambda=10$.
The output of generator was activated by ${\rm tanh}$.
We used ADAM for training critics with parameters $\alpha=10^{-4}, \beta_1=0.5, \beta_2=0.9$, minibatch size $b=50$.
When we run Algorithm \ref{WGAN_FT}, gradient layers are stacked below $\tanh$.
The learning rates were set to $\eta = 0.1$.
The number of inner iterations $T_0$ for training the critics was set to $5 \times datasize / b$.
Figure \ref{toy_datasets} shows the results for toy datasets for running $T=100$ iterations of generators.
Although we ran the algorithm without pre-training the generators, we obtained better results only for a few iterations.
This is surprising, because these toy datasets are difficult to learn and fail to converge in the standard GANs and WGANs.
Whereas improved variants of these models overcome this difficulty, they usually require more than 1,000 iterations to converge.

\begin{figure}[t]
  \begin{center}
    \includegraphics[width=75mm,angle=0]{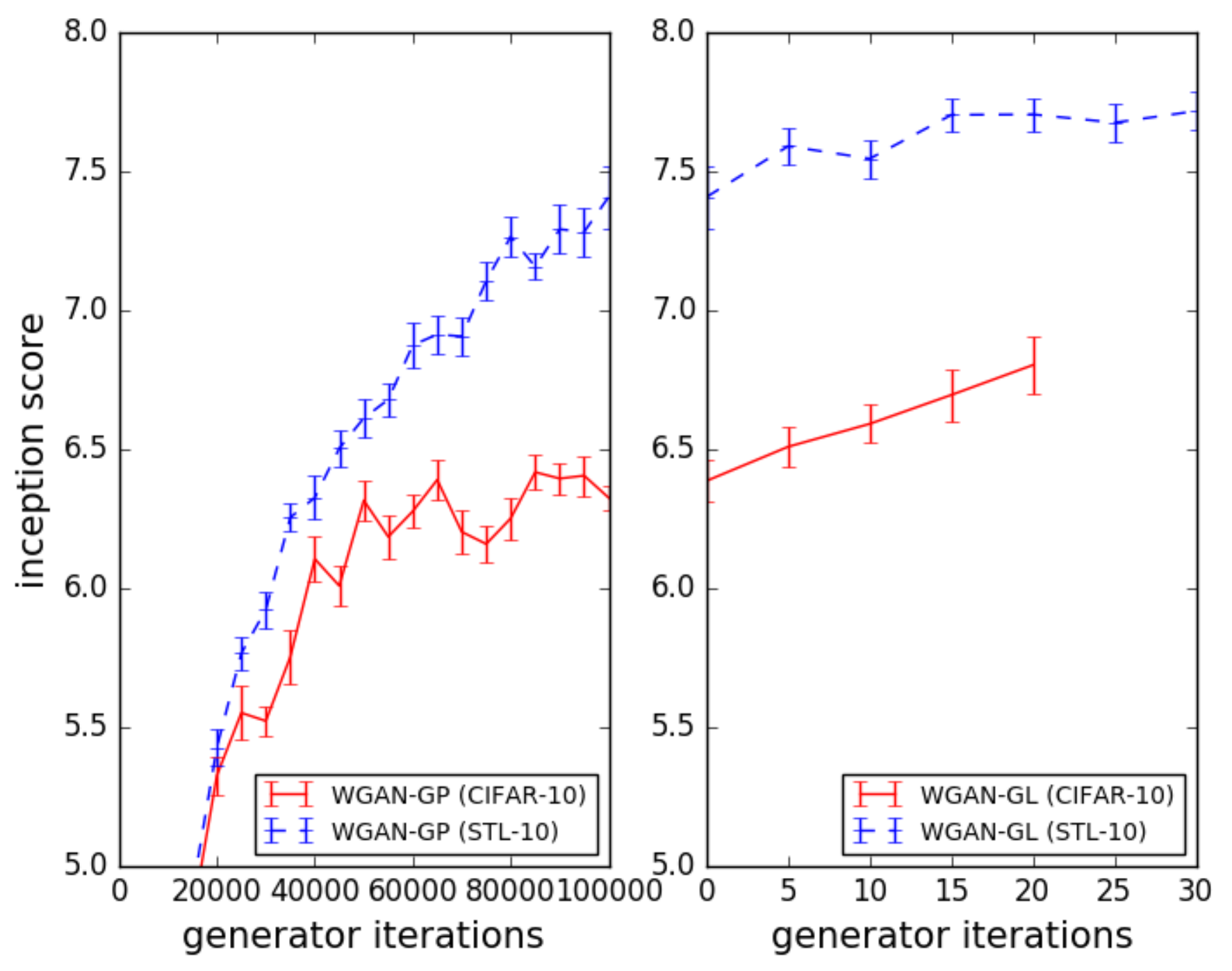}    
      \caption{Left: Inception scores obtained by WGAN-GP, Right: Inception scores obtain by Algorithm \ref{WGAN_FT}
        starting from the result of WGAN-GP.} \label{inception_result}
  \end{center}    
\end{figure}
\paragraph{CIFAR-10 and STL-10}
We first trained WGAN-GP with a two-sided penalty ($\lambda=10$) on the CIFAR-10 and STL-10 datasets.
We used DCGAN for both the critic and the generator.
The batch normalization \cite{ioffe2015batch} was used only for the generator.
The critic and the generator were trained by using ADAM with $\alpha=10^{-4}, \beta_1=0.5, \beta_2=0.9$, and minibatch size $b=64$.
The number of inner iterations for training the critics were $5$ and we ran ADAM for $10^5$-iterations.
The left side of Figure \ref{inception_result} shows the inception scores obtained by WGAN-GP.
It seems that the learning procedure is slowed down in a final training phase, especially for CIFAR-10.
The final inception score on CIFAR-10 and STL-10 are $6.32$ and $7.40$, respectively.
We next ran Algorithm \ref{WGAN_FT} starting from the result of WGAN-GP.
The critics were trained by ADAM with the same parameters, except for $\alpha=5\times10^{-5}$ and $T_0=datasize / b$.
The learning rates were set to $0.5$ for CIFAR-10 and $0.3$ for STL-10.
The right side of Figure \ref{inception_result} shows the inception scores obtained by Algorithm \ref{WGAN_FT}.
Note that, since we focus on the optimization ability of generators, we plotted results with the horizontal axis as the number of outer-iterations. 
We observed a rapid increase in the inception scores, which were improved to $6.80$ and $7.71$ on CIFAR-10 and STL-10, respectively.

\section{Conclusion}
We have proposed a gradient layer that enhances the convergence speed of adversarial training.
Because this layer is based on the perspective of infinite-dimensional optimization, it can avoid local optima induced by non-convexity and parameterization.
We have also provided two perspectives of the gradient layer: (i) the functional gradient method and 
(ii) the discretization procedure of the gradient flow.
We have proven the fast convergence of the gradient layer by utilizing this perspective, and experimental results have empirically shown its reliable performance.

\section*{\ackname}
This work was partially supported by MEXT KAKENHI (25730013, 25120012, 26280009, 15H01678 and 15H05707), JST-PRESTO (JPMJPR14E4), and JST-CREST (JPMJCR14D7, JPMJCR1304).

% ----- for Arxiv -----
\bibliographystyle{plain}

% ------for Arxiv Prep --------
%\bibliographystyle{plain}
%\bibliography{ref}
% ---------------------

% supplement
\clearpage
\onecolumn
\renewcommand{\thesection}{\Alph{section}}
\renewcommand{\thesubsection}{\Alph{section}. \arabic{subsection}}
\renewcommand{\thetheorem}{\Alph{theorem}}
\renewcommand{\thelemma}{\Alph{lemma}}
\renewcommand{\theproposition}{\Alph{proposition}}
\renewcommand{\thedefinition}{\Alph{definition}}
\renewcommand{\thecorollary}{\Alph{corollary}}
\renewcommand{\theassumption}{\Alph{assumption}}

\setcounter{section}{0}
\setcounter{subsection}{0}
\setcounter{theorem}{0}
\setcounter{lemma}{0}
\setcounter{proposition}{0}
\setcounter{definition}{0}
\setcounter{corollary}{0}
\setcounter{assumption}{0}

\part*{\Large{Appendix}}

\section{The Other Usage}
We introduce the usage that inserts a fixed number of gradient layers into the bottom of the generator to assist overall training procedure, 
which is described in Algorithm \ref{WGAN_GL}.
Note that we always use latest parameters of $f, g$ for gradient layers in Algorithm \ref{WGAN_GL}.
When gradient layers are inserted in the middle of the generator: $g_1\circ \phi \circ g_2$, we can apply Algorithm \ref{WGAN_GL} by setting $\mu_n \leftarrow g_{2\sharp}\mu_n, g \leftarrow g_1$.
After training, we can generate samples by using parameters of the critic and the generator, the learning rate, and the number of gradient layers,
which is described in Algorithm\ref{data_gen_proc_GL}.

\begin{algorithm}[h]
  \caption{Assisting WGAN-GP}
  \label{WGAN_GL}
\begin{algorithmic}
  \STATE {\bfseries Input:} The base distribution $\mu_n$, the minibatch size $b$, the number of iterations $T$,
  the initial parameters $\tau_0$ and $\theta_0$ of the critic and the generator,
  the number of iterations $T_0$ for the critic, the regularization parameter $c$, learning rate $\eta$ for gradient layers, the number of gradient layers $l$.\\
   \vspace{1mm}

   \FOR{$k=0$ {\bfseries to} $T-1$}
   \STATE $\tau \leftarrow \tau_k$
   \FOR{$k_0=0$ {\bfseries to} $T_0-1$}
   \STATE $\{x_i\}_{i=1}^b \sim \mu_D^b$, $\{z_i\}_{i=1}^b \sim \mu_n^b$, $\{\epsilon_i\}_{i=1}^b \sim U[0,1]^b$ \\
   \STATE \# $G_\eta^{\tau_k,\theta_k}$ is applied $l$ times.
   \STATE $\{z_i\}_{i=1}^b \leftarrow \{g_{\theta_k} \circ G^{\tau_k,\theta_k}_\eta\circ \cdots \circ G^{\tau_k,\theta_k}_\eta(z_i)\}_{i=1}^b$
   \STATE $\{\tilde{x}_i\}_{i=1}^b \leftarrow \{\epsilon_i x_i + (1-\epsilon_i)z_i\}_{i=1}^b$
   \STATE $v = \nabla_\tau \frac{1}{b}\sum_{i=1}^b [f_\tau(z_i)-f_\tau(x_i) + \lambda R_{f_\tau}(\tilde{x}_i)])$
   \STATE $\tau \leftarrow \mathcal{A}(\tau,v)$ \\
   \ENDFOR
   \STATE $\tau_{k+1} \leftarrow \tau$   
   \vspace{1mm}
   
   \STATE $\{z_i\}_{i=1}^b \sim \mu_n^b$
   \STATE \# $G_\eta^{\tau_{k+1},\theta_k}$ is applied $l$ times.
   \STATE $\{z_i\}_{i=1}^b \leftarrow \{ G_\eta^{\tau_{k+1},\theta_k} \circ \cdots \circ G_\eta^{\tau_{k+1},\theta_k}\}_{i=1}^b$
   \STATE $v \leftarrow - \nabla_\theta \frac{1}{b}\sum_{i=1}^b f_{\tau_{k+1}}( g_{\theta_k}(z_i) )$
   \STATE $\theta_{k+1} \leftarrow \mathcal{A}(\theta_k,v)$
   \ENDFOR
   \STATE Return $\tau_T, \theta_T$.
\end{algorithmic}
\end{algorithm}

\begin{algorithm}[h]
  \caption{Data Generation for Algorithm \ref{WGAN_GL}}
  \label{data_gen_proc_GL}
\begin{algorithmic}
  \STATE {\bfseries Input:} the seed drawn from the base measure $z\sim \mu_n$, the parameter $\tau$ and $\theta$ of the critic and the generator, the learning rate $\eta$,
  the number of gradient layers $l$.\\
   \vspace{1mm}
   \STATE Apply gradient layers $l$ times $z' \leftarrow G_\eta^{\tau,\theta}\circ \cdots \circ G_\eta^{\tau,\theta}(z)$\\
   \STATE Return the sample $g_{\theta}(z')$.
\end{algorithmic}
\end{algorithm}

We next briefly review Algorithm \ref{WGAN_GL} in which a fixed number of gradient layers with latest parameters is inserted in the bottom of a generator of WGAN-GP.
That is, gradient layers modify a noise distribution $\mu_n$ to improve the quality of a generator by the functional gradient method.

\section{Brief Review of Wasserstein Distance}
We introduce some facts concerning the Wasserstein distance, which is used for the proof of Proposition \ref{opt_wasserstein_prop}.
We first describe a primal form of the Wasserstein distance.
For $p\geq 1$ let $\mathcal{P}_p$ be the set of Borel probability measures with finite $p$-the moment on $\mathcal{X} \subset \mathbb{R}^v$.
For $\mu, \nu \in \mathcal{P}_p$ a probability measure $\gamma$ on $\mathcal{X} \times \mathcal{X}$
satisfying $\pi^1_\sharp \gamma = \mu$ and $\pi^2_\sharp \gamma = \nu$ is called a {\it plan} ({\it coupling}),
where $\pi^i$ denotes the projection from $\mathcal{X} \times \mathcal{X}$ to the $i$-th space $\mathcal{X}$.
We denote by $\Gamma(\mu,\nu)$ the set of all plans between $\mu$ and $\nu$.
We now introduce Kantorovich's formulation of the $p$-Wasserstein distance $W_p$ for $p\geq 1$.
\begin{equation}
  W_p^p(\mu,\nu) = \min_{\gamma \in \Gamma(\mu,\nu)} \int_{\mathcal{X}\times \mathcal{X}}\| x-y\|_2^p d\gamma(x,y) \label{kantrovich}
\end{equation}
When $p=1$ and $\mu,\nu$ have bounded supports, there is the Kantorovich-Rubinstein dual formulation of the $1$-Wasserstein distance, which coincide with the definition introduced in the paper.
The existence of optimal plans is guaranteed under more general integrand (c.f. \cite{villani2008optimal,ambrosio2008gradient}) and we denote by $\Gamma$ the set of optimal plans.
Prior to this formulation, the optimal transport problem in Monge's formulation was proposed.
\begin{equation}
  \inf_{\phi_\sharp \mu = \nu }  \int_{\mathcal{X}}\| x-\phi(x)\|_2^p d\mu(x), \label{monge}
\end{equation}
where the infimum is taken over all transport maps $\phi: \mathcal{X} \rightarrow \mathcal{X}$ from $\mu$ to $\nu$, i.e., $\phi_\sharp \mu = \nu$.
Because a transport map $\phi$ gives a plan $\gamma = (id\times \phi)_\sharp \mu$, we can easily find (\ref{kantrovich}) $\leq $ (\ref{monge}).
In general, an optimal transport map that solves the problem (\ref{monge}) does not always exist unlike Kantrovich problem (\ref{kantrovich}).
However, in the case where $p>1$, $\mathcal{X}=\mathbb{R}^v$, and $\mu$ is absolutely continuous with respect to the Lebesgue measure,
the existence of optimal transport maps is guaranteed \cite{brenier1987decomposition, brenier1991polar} and it is extended to more general integrand (see \cite{ambrosio2008gradient}).
Moreover, this optimal transport map also solves Kantrovich problem (\ref{kantrovich}), i.e., these two distances coincide.
On the other hand, in the case $p=1$, the existence of optimal transport maps is much more difficult, but it is shown in limited settings as follows.

\begin{proposition} [Sudakov \cite{sudakov1979geometric}, see also \cite{ambrosio2003lecture}] \label{sudakov_prop}
  Let $\mathcal{X}$ be a compact convex subset in $\mathbb{R}^v$ and assume that $\mu$ is absolutely continuous with respect to Lebesgue measure.
  Then, there exists an optimal transport map $\phi$ from $\mu$ to $\nu$ for the problem \ref{monge} with $p=1$.
  Moreover, if $\nu$ is also absolutely continuous with respect to Lebesgue measure, we can choose $\psi$ so that $\psi^{-1}$ is well defined $\mu_0$-a.e., and $\phi^{-1}_\sharp \nu = \mu$.
\end{proposition}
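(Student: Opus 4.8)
The plan is to follow the classical disintegration argument: reduce the $v$-dimensional $L^{1}$ Monge problem to a family of one-dimensional problems along \emph{transport rays}, solve each one dimensionally, and glue the solutions measurably. First I would invoke the existence of an optimal plan $\gamma\in\Gamma(\mu,\nu)$ for the cost $c(x,y)=\|x-y\|_2$ (guaranteed as recalled above) together with Kantorovich duality, which produces a $1$-Lipschitz potential $u$ on $\mathcal{X}$ with $u(x)-u(y)=\|x-y\|_2$ for $\gamma$-a.e.\ $(x,y)$. This identity forces each such pair to lie on a segment along which $u$ decreases at unit speed; the maximal such segments are the transport rays, and $\gamma$ is concentrated on pairs sharing a common ray.

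Next I would set up the ray decomposition. The transport set $T$ of points interior to some ray carries $\mu$, since its complement (ray endpoints together with the non-transport set) is Lebesgue-null, hence $\mu$-null by absolute continuity. Parametrizing the rays by a Borel quotient map and applying the disintegration theorem, I would write $\mu=\int \mu_\alpha\, d\pi(\alpha)$ and $\nu=\int \nu_\alpha\, d\pi(\alpha)$ with $\mu_\alpha,\nu_\alpha$ concentrated on the ray indexed by $\alpha$. Each ray is isometric to an interval in $\mathbb{R}$, so the $L^1$ problem restricts to a one-dimensional Monge problem whose solution is monotone rearrangement: when $\mu_\alpha$ has no atom, the map $\phi_\alpha=F_{\nu_\alpha}^{-1}\circ F_{\mu_\alpha}$ built from the cumulative distribution functions is a well-defined optimal transport map with $(\phi_\alpha)_\sharp\mu_\alpha=\nu_\alpha$. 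Gluing $\{\phi_\alpha\}$ yields a Borel map $\phi$ with $\phi_\sharp\mu=\nu$, and because $\phi$ moves each point along its own ray it inherits optimality for (\ref{monge}) from the fiberwise optimality and the fact that $\gamma$ only pairs points on common rays.

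The hard part will be showing that the conditionals $\mu_\alpha$ are non-atomic for $\pi$-a.e.\ $\alpha$; this is precisely the step where Sudakov's original argument was incomplete and which was later repaired (see \cite{ambrosio2003lecture}). The obstruction is that absolute continuity of $\mu$ in $\mathbb{R}^v$ does not transparently descend to its one-dimensional conditionals, since the rays form a curved foliation. I would control this by establishing that the ray-direction map is countably Lipschitz, i.e.\ Lipschitz on each piece of a countable Borel partition of $T$ on which the directions vary boundedly, and then apply an area/coarea estimate showing that a set which is $\mathcal{H}^1$-null on $\pi$-a.e.\ ray lifts to a Lebesgue-null set in $\mathcal{X}$. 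Contraposing, a $\mu$-positive set cannot be fiberwise negligible, giving $\mu_\alpha\ll\mathcal{H}^1$ on its ray and hence atomless.

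Finally, for the invertibility statement I would observe that when $\nu$ is also absolutely continuous the same estimate makes $\nu_\alpha$ atomless, so each one-dimensional monotone map $\phi_\alpha$ is a strictly increasing bijection between full-measure subsets of the two intervals. Its inverse $\phi_\alpha^{-1}$ is then defined $\nu_\alpha$-a.e.\ with $(\phi_\alpha^{-1})_\sharp\nu_\alpha=\mu_\alpha$; gluing these produces a Borel map defined $\nu$-a.e.\ that serves as $\phi^{-1}$ and satisfies $(\phi^{-1})_\sharp\nu=\mu$, which is the asserted invertibility.
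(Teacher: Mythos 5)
The paper itself offers no proof of Proposition \ref{sudakov_prop}: it is imported as a known result, with the reader pointed to \cite{sudakov1979geometric} and \cite{ambrosio2003lecture}, and it is only \emph{used} (in the proof of Proposition \ref{opt_wasserstein_prop}). So your sketch is not competing with an argument in the paper; it reconstructs the proof from the cited literature, and it does so along exactly the intended lines --- Kantorovich duality and a $1$-Lipschitz potential, decomposition into transport rays, disintegration of $\mu$ and $\nu$ along the rays, one-dimensional monotone rearrangement on each ray, and measurable gluing --- including the correct identification of the historically delicate step (non-atomicity of the conditionals $\mu_\alpha$), which is precisely where Sudakov's original argument was incomplete and which Ambrosio repaired via countably Lipschitz regularity of the ray directions and a coarea-type estimate. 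Your reading of the ``moreover'' clause (with $\nu$ also absolutely continuous, both families of conditionals are atomless, so the fiberwise monotone maps are essentially invertible and the inverses glue to a map pushing $\nu$ to $\mu$) is also the right interpretation of the statement, whose notation ($\phi$ versus $\psi$, ``$\mu_0$'') is garbled in the paper.

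One claim in your sketch is false as stated: the complement of the transport set need not be Lebesgue-null. Points whose mass does not move at all --- for instance every point of a region on which $\mu$ and $\nu$ coincide, or all of $\mathcal{X}$ in the extreme case $\mu=\nu$, where no nondegenerate ray exists --- can form a set of positive, even full, Lebesgue measure. What is Lebesgue-negligible is only the set of \emph{endpoints} of nondegenerate rays. The standard patch, and the one in \cite{ambrosio2003lecture}, is to show that any optimal plan $\gamma$ pairs $\gamma$-a.e.\ point $x$ outside the transport set with itself, to set $\phi = id$ there, and to run your ray-by-ray construction only on the transport set; optimality of the glued map then follows as you argue, since the cost off the transport set is zero. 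With that correction your outline is sound at the level of a sketch, with the understanding that the countably Lipschitz and coarea lemmas it invokes are the technical heart of the cited proof and are taken on faith.
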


Under the same assumption in Proposition \ref{sudakov_prop}, it is known that two distances (\ref{monge}) and (\ref{kantrovich}) coincide \cite{ambrosio2003lecture}, that is,
the Kantrovich problem (\ref{kantrovich}) is solved by an optimal transport map.

\section{Proofs}
We here the give proof of Proposition \ref{surrogate_prop}.
\begin{proof} [ Proof of Proposition \ref{surrogate_prop}]
Note that $\mathcal{L}(\psi) = \hat{\mathcal{L}}(f_{\psi}^*,\psi)$.
For $\psi \in B_r^\infty(\phi)$, we divide $\mathcal{L}(\psi)$ into two terms as follows.
\begin{equation}
  \mathcal{L}(\psi) = (\hat{\mathcal{L}}(f_\psi^*,\psi) - \hat{\mathcal{L}}(f_\phi^*,\psi)) + \hat{\mathcal{L}}(f_\phi^*,\psi). \label{divide_obj}
\end{equation}
We first bound the first term in (\ref{divide_obj}) by $L$-smoothness of $\hat{\mathcal{L}}(f_{\psi'}^*,\psi)$ with respect to $\psi'$ at $\psi$ in $B_r^{\infty}(\psi)$.
\begin{equation*}
  \left| \hat{\mathcal{L}}(f_{\phi}^*,\psi) - ( \hat{\mathcal{L}}(f_{\psi}^*,\psi) + \pd< \nabla_{\psi'}\hat{\mathcal{L}}(f_{\psi'}^*,\psi)\big|_{\psi'=\psi}, \phi-\psi >_{L^2(\mu_g)}) \right| 
  \leq \frac{L}{2}\| \phi-\psi \|_{L^2(\mu_g)}^2.
\end{equation*}
Since $\hat{\mathcal{L}}(f_{\psi'}^*,\psi)$ attains the maximum, we have $\nabla_{\psi'}\hat{\mathcal{L}}(f_{\psi'}^*,\psi)\big|_{\psi'=\psi} = 0$ and have
\begin{equation}
  \left| \hat{\mathcal{L}}(f_{\phi}^*,\psi) - \hat{\mathcal{L}}(f_{\psi}^*,\psi)  \right| 
  \leq \frac{L}{2}\| \phi-\psi \|_{L^2(\mu_g)}^2.  \label{lipschitz_bound_1}
\end{equation}

We next bound $\hat{\mathcal{L}}(f_\phi^*,\psi)$ in (\ref{divide_obj}).
We remember that
\begin{equation}
  \hat{\mathcal{L}}(f_\phi^*,\psi) = \mathbb{E}_{x\sim\mu_D}[f_\phi^*(x)]-\mathbb{E}_{x \sim \mu_g}[f_\phi^*\circ \psi(x)] - \lambda R_{f_\phi^*}. \label{obj_def}
\end{equation}
By $L$-smoothness of $f_\phi^*$, it follows that
\begin{equation*}
  \left| f_\phi^*(\psi(x)) - ( f_\phi^*(\phi(x)) + \pd< \nabla_z f_\phi^*(z)\big|_{z=\phi(x)},\psi(x)-\phi(x)>_2 )\right| \leq \frac{L}{2}\|\psi(x)-\phi(x)\|_2^2.
\end{equation*}
By taking the expectation with respect to $\mathbb{E}_{\mu_g}$, we get
\begin{equation*}
  \left| -\mathbb{E}_{x \sim \mu_g}[f_\phi^*\circ \psi(x)] +  \mathbb{E}_{\mu_g}[f_\phi^*(\phi(x))] + \pd< \nabla_z f_\phi^*\circ \phi,\psi-\phi>_{L^2(\mu_g)} \right|
  \leq \frac{L}{2}\|\psi-\phi\|_{L^2(\mu_g)}^2. 
\end{equation*}

We substitute this inequality into (\ref{obj_def}), we have 
\begin{align}
  \hat{\mathcal{L}}(f_\phi^*,\psi) &\leq \mathbb{E}_{x\sim\mu_D}[f_\phi^*(x)] + \frac{L}{2}\|\psi-\phi\|_{L^2(\mu_g)}^2 \notag
  - ( \mathbb{E}_{\mu_g}[f_\phi^*(\phi(x))] + \pd< \nabla_z f_\phi^*\circ \phi,\psi-\phi>_{L^2(\mu_g)} ) - \lambda R_{f_\phi^*} \notag \\
  &= \hat{\mathcal{L}}(f_\phi^*,\phi) - \pd< \nabla_z f_\phi^*\circ \phi,\psi-\phi>_{L^2(\mu_g)} + \frac{L}{2}\|\psi-\phi\|_{L^2(\mu_g)}^2  \notag\\
  &= \mathcal{L}(\phi) + \pd< \nabla_\phi \mathcal{L}(\phi),\psi-\phi>_{L^2(\mu_g)} + \frac{L}{2}\|\psi-\phi\|_{L^2(\mu_g)}^2, \label{lipschitz_bound_2}
\end{align}
and the opposite inequality
\begin{equation}
  \hat{\mathcal{L}}(f_\phi^*,\psi) \geq \mathcal{L}(\phi) + \pd< \nabla_\phi \mathcal{L}(\phi),\psi-\phi>_{L^2(\mu_g)} - \frac{L}{2}\|\psi-\phi\|_{L^2(\mu_g)}^2, \label{lipschitz_bound_2b}
\end{equation}
where we used $\nabla_\phi \mathcal{L}(\phi) = - \nabla_z f_\phi^*(z)|_{z=\phi(\cdot)}$.
By combining (\ref{divide_obj}),(\ref{lipschitz_bound_1}), and (\ref{lipschitz_bound_2}), we have

\begin{equation*}
  \mathcal{L}(\psi) \leq \mathcal{L}(\phi) + \pd< \nabla_\phi \mathcal{L}(\phi),\psi-\phi>_{L^2(\mu_g)} + L\| \phi-\psi \|_{L^2(\mu_g)}^2.
\end{equation*}

Moreover, since $\hat{\mathcal{L}}(f_\psi^*,\psi) - \hat{\mathcal{L}}(f_\phi^*,\psi) \geq 0 $ in (\ref{divide_obj}), we have $\mathcal{L}(\psi) \geq \hat{\mathcal{L}}(f_\phi^*,\psi)$.
Therefore, we get the opposite inequality by (\ref{lipschitz_bound_2b})
\begin{equation*}
    \mathcal{L}(\psi) \geq \mathcal{L}(\phi) + \pd< \nabla_\phi \mathcal{L}(\phi),\psi-\phi>_{L^2(\mu_g)} - \frac{L}{2}\| \phi-\psi \|_{L^2(\mu_g)}^2.
\end{equation*}
This finishes the proof.
\end{proof}

We next provide the proof of Theorem \ref{convergence_theorem}.
\begin{proof}[ Proof of Theorem \ref{convergence_theorem}]
  Noting that $\| \eta \nabla_{\phi_k} \mathcal{L}(\phi_k)\|_\infty \leq r$ and Lipschitz smoothness of $\mathcal{L}$, we have 
  \begin{align*}
    \mathcal{L}(\phi_{k+1}) &\leq \mathcal{L}(\phi_k) - \eta  \| \nabla_\phi \mathcal{L}(\phi_k)\|^2_{L^2(\mu_g)} + \frac{\eta^2L}{2}\|\nabla_\phi \mathcal{L}(\phi_k)\|_{L^2(\mu_g)} \\
    &= \mathcal{L}(\phi_k) - \eta(1-\eta L/2)  \| \nabla_\phi \mathcal{L}(\phi_k)\|^2_{L^2(\mu_g)}.
  \end{align*}

  Since $\eta \leq 1/L$, we have $\mathcal{L}(\phi_{k+1}) \leq \mathcal{L}(\phi_k) - \frac{\eta}{2}\| \nabla_\phi \mathcal{L}(\phi_k) \|^2_{L^2(\mu_g)}$.
  Summing up over $k\in \{0,\ldots,T-1\}$ and dividing by $T$ we obtain
  \begin{align*}
    \frac{1}{T} \sum_{k=0}^{T-1} \| \nabla_\phi \mathcal{L}(\phi_k) \|^2_{L^2(\mu_g)} \leq \frac{2}{\eta T}(\mathcal{L}(\phi_0)-\mathcal{L}(\phi_T)).
  \end{align*}
  This inequality finishes the proof of the theorem.
\end{proof}

\begin{proof} [ Proof of Proposition \ref{opt_wasserstein_prop} ]
  By Proposition \ref{sudakov_prop}, there exists an optimal transport $\psi$ from $\mu_g$ to $\mu_D$ and an optimal plan is given by $\gamma = (id\times \psi)_\sharp \mu_g$.
  We set $\psi_t = (1-t)id+t\psi$ and $\mu_t = \psi_{t\sharp}\mu_g$.
  Because $(\psi_s,\psi_t)_{\sharp}\mu_g$ ($0\leq s<t \leq 1$) gives a plan between $\mu_g$ and $\mu_D$, we have
  \begin{align}
    W_1(\mu_s,\mu_t) &\leq \int_{\mathcal{X}\times\mathcal{X}} \|x-y\|_2 d(\psi_s,\psi_t)_{\sharp}\mu_g \notag \\
    &= \int_{\mathcal{X}} \|\psi_s(x)-\psi_t(x)\|_2 d\mu_g \notag \\
    &= (t-s)\int_{\mathcal{X}} \|x-\psi(x)\|_2 d\mu_g = (t-s)W_1(\mu_g,\mu_D).   \label{geo_ineq_1}
  \end{align}
  We next prove the opposite inequality.
  Noting that $(id,\psi_s)_\sharp \mu_g$ is a plan from $\mu_g$ to $\mu_s$ and $(\psi_t,\psi)_\sharp \mu_g$ is a plan from $\mu_t$ to $\mu_D$, we have the following two inequalities
  \begin{align*}
    W_1(\mu_g,\mu_s) &\leq \int_{ \mathcal{X}\times \mathcal{X}} \|x-y\|_2 d(id,\psi_s)_{\sharp} \mu_g = \int_{ \mathcal{X} } \|x-\psi_s(x)\|_2 d \mu_g = s W_1(\mu_g,\mu_D), \\
    W_1(\mu_t,\mu_D) &\leq \int_{ \mathcal{X}\times \mathcal{X}} \|x-y\|_2 d(\psi_t,\psi)_{\sharp} \mu_g = \int_{ \mathcal{X} } \|\psi_t(x)-\psi(x)\|_2 d \mu_g = (1-t) W_1(\mu_g,\mu_D).
  \end{align*}
  Using these two inequalities and the triangle inequality, we get
  \begin{equation*}
    W_1(\mu_g,\mu_D) \leq W_1(\mu_g,\mu_s) + W_1(\mu_s,\mu_t) + W_1(\mu_t,\mu_D) \leq (1+s-t)W_1(\mu_g,\mu_D) +  W_1(\mu_s,\mu_t). 
  \end{equation*}
  That is $(t-s)W_1(\mu_g,\mu_D) \leq W_1(\mu_s,\mu_t)$.
  By combining this inequality and (\ref{geo_ineq_1}), we have $(t-s)W_1(\mu_g,\mu_D) = W_1(\mu_s,\mu_t)$ and this finishes the proof.
\end{proof}

\section{Labeled Faces in the Wild}
In this section we provide the result on the Labeled Faces in the Wild dataset.
The result is shown in Figure \ref{lfw_generated}.
After training WGAN-GP (left), we ran Algorithm \ref{FGD} for a few iterations (right).
\begin{figure}[th]
 \includegraphics[width=78mm,angle=0]{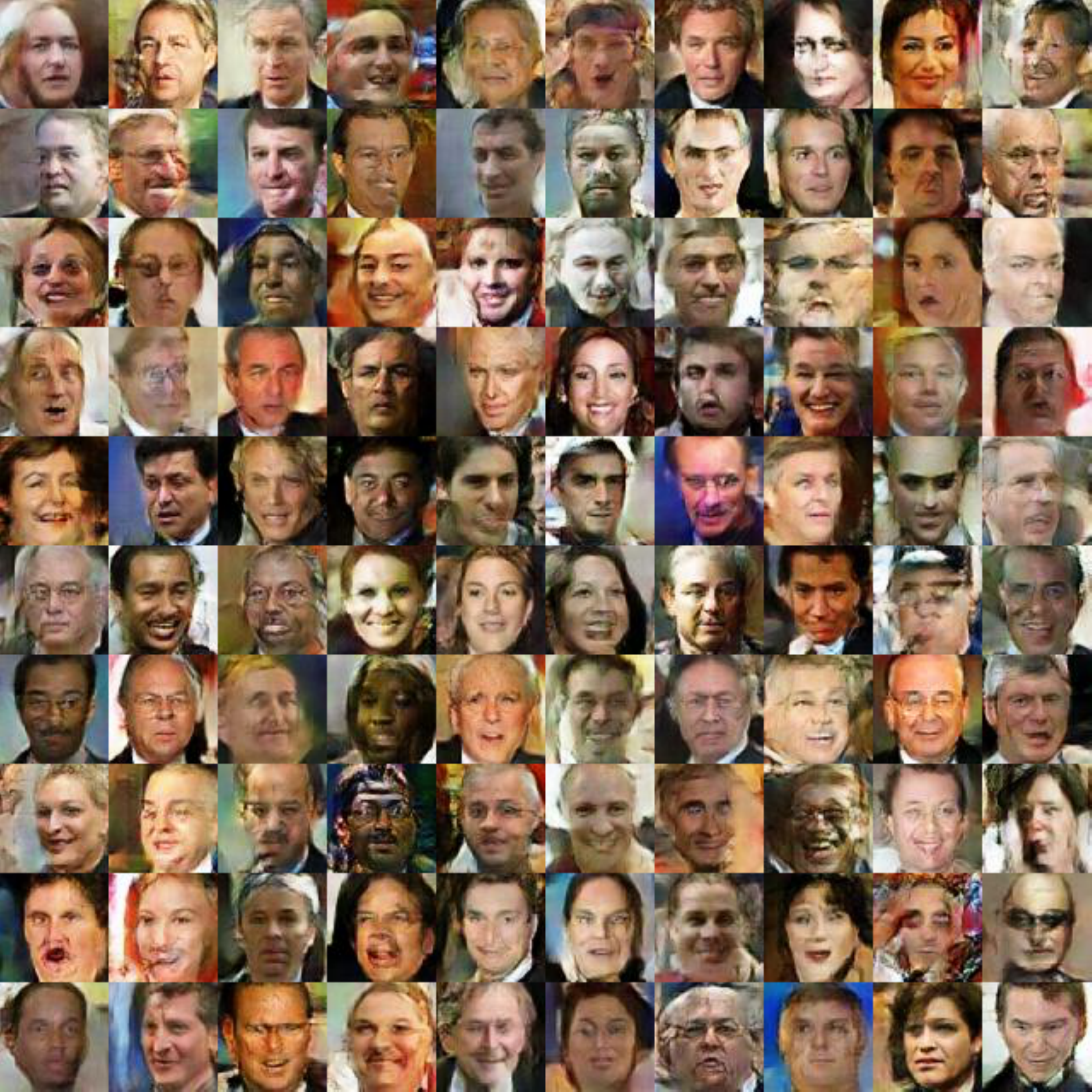}
 \includegraphics[width=78mm,angle=0]{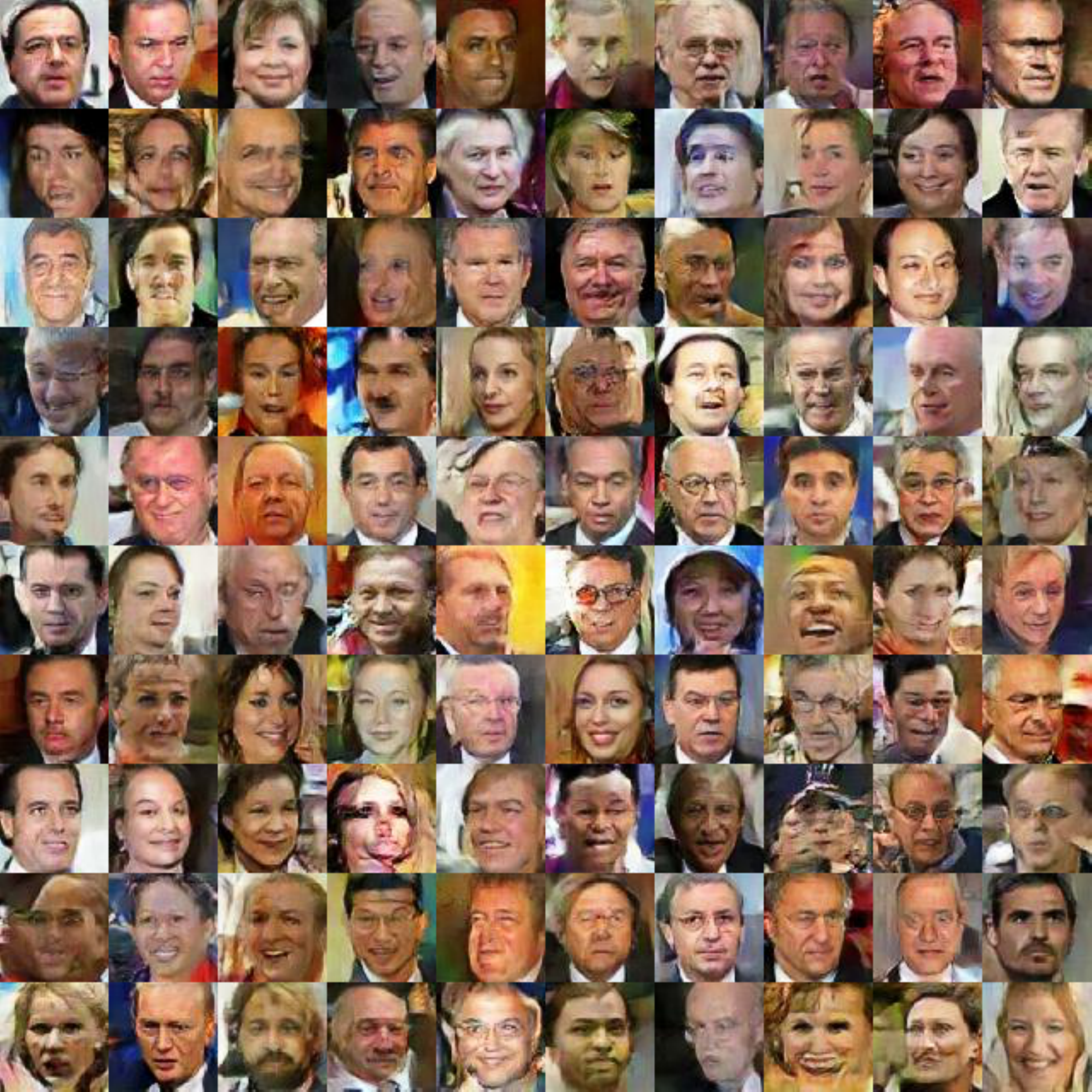}
 \caption{Random samples drawn from the generator trained by WGAN-GP (left) and the gradient layer (right).} \label{lfw_generated}
\end{figure}

\end{document}